\newtheorem*{theorem*}{Theorem}
\pgfplotsset{compat=1.18}
\definecolor{darkblue}{rgb}{0, 0, 0.5}
\theoremstyle{plain}
\newtheorem{theorem}{Theorem}
\newtheorem{proposition}[theorem]{Proposition}
\newtheorem*{proposition*}{Proposition}
\theoremstyle{definition}
\newtheorem*{definition*}{Definition}
\theoremstyle{remark}
\title{Economic Evaluation of LLMs}
\author{Michael J. Zellinger \\ \textbf{Matt Thomson}\\
California Institute of Technology\\
\texttt{\{zellinger, mthomson\}@caltech.edu} \\
}
\date{}
\begin{document}

\ifcolmsubmission
\linenumbers
\fi

\maketitle

\begin{abstract}

Practitioners often navigate LLM performance trade-offs by plotting Pareto frontiers of optimal accuracy-cost trade-offs. However, this approach offers no way to compare between LLMs with distinct strengths and weaknesses: for example, a cheap, error-prone model vs a pricey but accurate one.

To address this gap, we propose \textit{economic evaluation} of LLMs. Our framework quantifies an LLM’s performance trade-off as a single number based on the economic constraints of a concrete use case, all expressed in dollars: the cost of making a mistake, the cost of incremental latency, and the cost of abstaining from a query.

We apply our economic evaluation framework to compare the performance of reasoning and non-reasoning models on difficult questions from the MATH benchmark, discovering that reasoning models offer better accuracy-cost tradeoffs as soon as the economic cost of a mistake exceeds \$0.01. In addition, we find that single large LLMs often outperform cascades when the cost of making a mistake is as low as \$0.1.

Overall, our findings suggest that when automating meaningful human tasks with AI models, practitioners should typically use the most powerful available model, rather than attempt to minimize AI deployment costs, since deployment costs are likely dwarfed by the economic impact of AI errors.
\end{abstract}

\maketitle

\section{Introduction}

Large language models (LLM) are commonly evaluated based on their accuracy, cost, latency, and other metrics (\citealp{liang2023}). Practitioners commonly display available models on an accuracy-cost scatter plot to identify models offering the best accuracy-cost trade-offs (\citealp{hu2024}). These optimal trade-offs are referred to as a ``Pareto frontier'' (\citealp{jin2006}; \citealp{branke2008}). Unfortunately, Pareto frontiers do not provide a way to rank models with distinct strengths and weaknesses. For example, it is not possible to compare a cheap, error-prone model against a pricey but accurate one. However, practitioners often face such dilemmas (\citealp{hammond2024}).

\begin{figure}[htbp]
    \centering

    \includegraphics[width=0.55\textwidth]{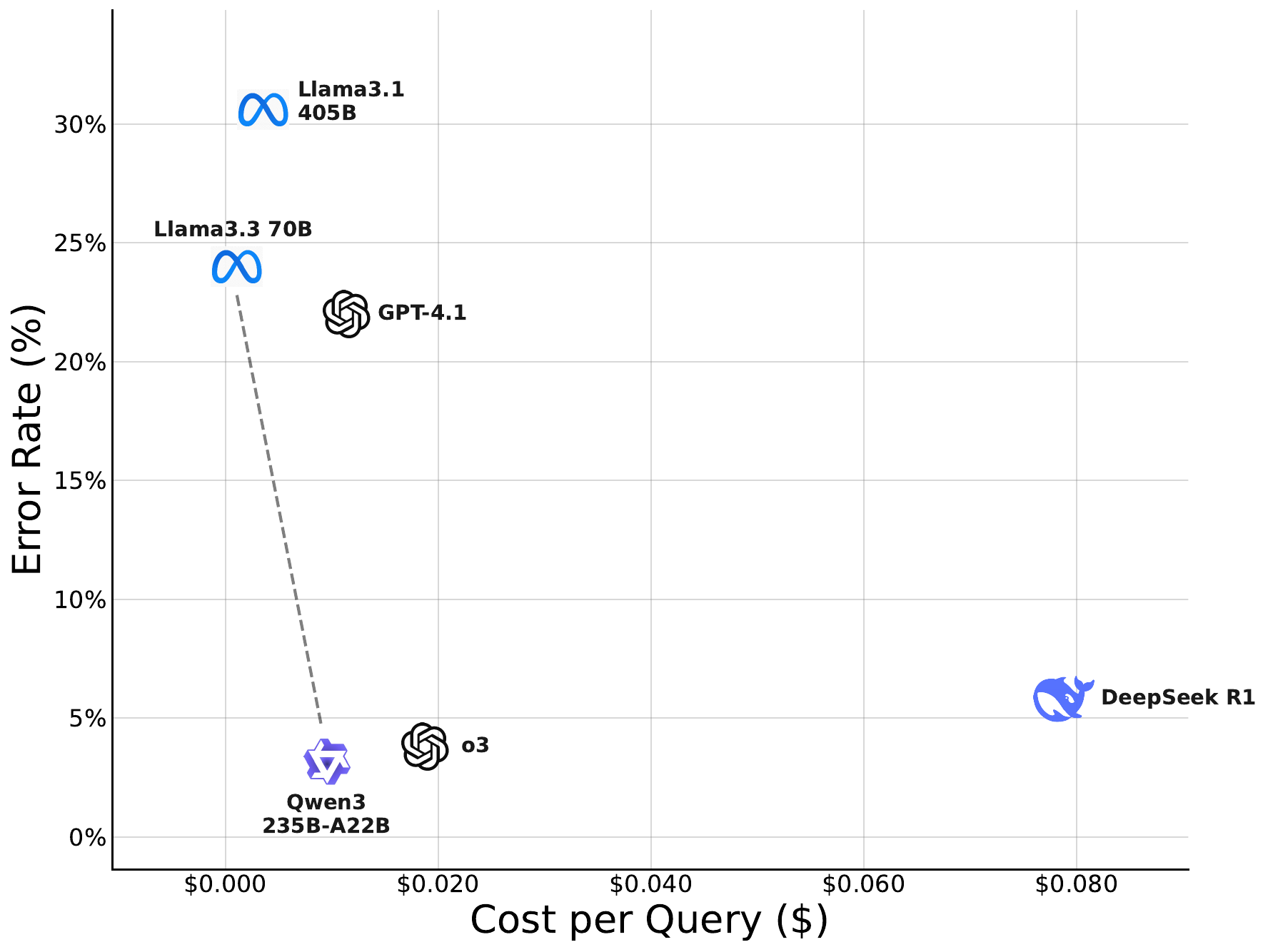}
    \hfill
    \raisebox{0.45cm}{\includegraphics[width=0.4\textwidth]{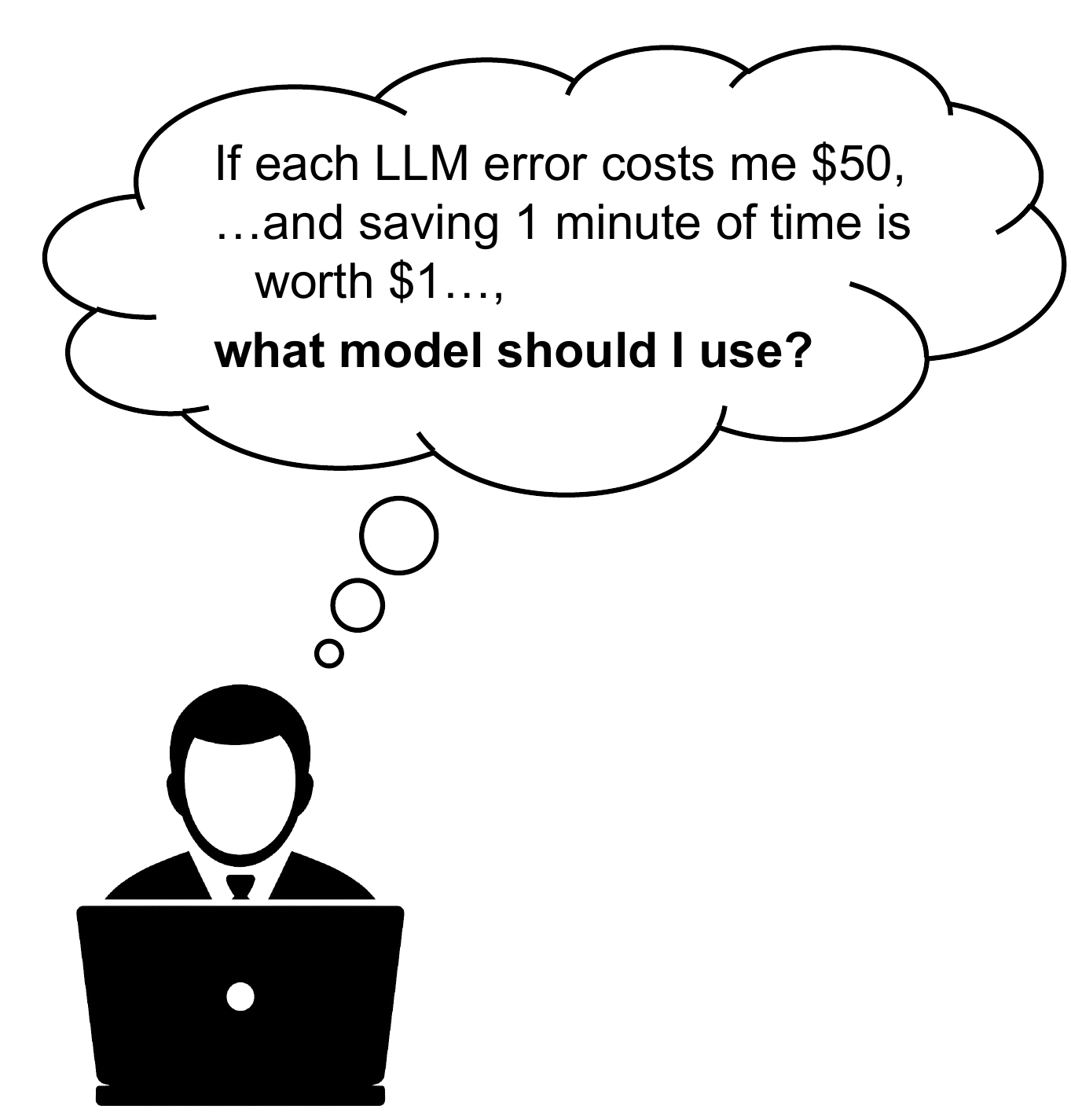}}

    \caption{Pareto frontiers of LLM performance do not reveal which model is best-suited for a given use case—a problem often faced by practitioners.}
    \label{fig:overview_figure}
\end{figure}

To address this problem, we propose an economic framework for evaluating AI models, which enables practitioners to identify the single best model for their use case, even when balancing competing objectives such as accuracy, cost, and latency. Drawing on the well-understood interpretation of Lagrange multipliers as shadow prices (\citealp{bertsekas1999}), we model a concrete use cases in terms of economic constraints expressed in dollars: the cost of making a mistake (the \textit{price of error}), the cost of incremental latency (the \textit{price of latency}), and the cost of abstaining from a query (the \textit{price of abstention}).

Given these economic parameters, our framework determines the optimal LLM. As an example, suppose a hospital deploys LLMs for medical note-taking. For this use case, the \textit{price of error} likely exceeds $\$100$, but the \textit{price of latency} may be low, perhaps $\$1$ per minute of incremental latency (equivalent to human wages of $\$60$/hour). By contrast, a natural language search engine for an e-commerce platform faces strikingly different constraints. In this domain, the price of error is much lower (perhaps $\$1$ per error) but each additional 100 milliseconds of latency are costly, since unresponsive websites drive away consumers (\citealp{kohavi2007}).

We demonstrate the practical utility of economic evaluation by addressing two important questions for practical LLM deployments. First, we ask whether reasoning or non-reasoning LLMs are optimal on complex problem-solving tasks when taking into account the reasoning models' higher dollar cost and latency. To this end, we evaluate six state-of-the-art reasoning and non-reasoning LLMs on difficult questions from the MATH benchmark (\citealp{hendrycks2021}). Our analysis shows that when latency is of no concern, reasoning models already outperform non-reasoning models when the cost of making a mistake is as low $\lambda_E^{\text{critical}} = \$0.01$. When setting the price of latency at \$10/minute (equivalent to human wages of \$600/hour), the critical price of error rises to $\lambda_E^{\text{critical}} = \$10$.

Second, we ask whether LLM cascades (\citealp{chen2023}) offer practical benefits, by comparing the performance of a cascade $\mathcal{M}_\text{small} \rightarrow \mathcal{M}_\text{big}$ against the performance of $\mathcal{M}_\text{big}$ by itself—taking into account accuracy, dollar cost, and latency. Surprisingly, we find that $\mathcal{M}_\text{big}$ typically outperforms $\mathcal{M}_\text{small} \rightarrow \mathcal{M}_\text{big}$ for prices of error as low as $\$0.1$. However, cascade performance notably depends on how well we can quantify the uncertainty of $\mathcal{M}_\text{small}$. Interestingly, using Llama3.1 405B as $\mathcal{M}_\text{small}$ yields a superior cascade that outperforms $\mathcal{M}_\text{big}$ for prices of error up to $\$10,000$, despite the fact that Llama3.1 405B performs comparatively poorly as a standalone model.

To supplement our experiments, we furnish theoretical results on 1) explaining the performance of a cascade $\mathcal{M}_\text{small} \rightarrow \mathcal{M}_\text{big}$ in terms of a novel covariance-based metric measuring the quality of $\mathcal{M}_\text{small}$'s uncertainty signal, and 2) connecting our economic evaluation framework to standard multi-objective optimization based on Pareto optimality.

In summary, our key contributions are the following:
\begin{itemize}
    \item We propose an economic framework for evaluating LLMs (and LLM systems), which determines a single optimal model based on a use case's economic constraints, all expressed in dollars: the cost of making a mistake (\textit{price of error}), the cost of incremental latency (\textit{price of latency}), and the cost of abstaining from a query (\textit{price of abstention}).
    \item We state empirical dollar figures for the critical cost of error at which reasoning LLMs offer superior accuracy-cost-latency trade-offs compared to non-reasoning models; and at which a single large LLM $\mathcal{M}_\text{big}$ outperforms a cascade $\mathcal{M}_\text{small} \rightarrow \mathcal{M}_\text{big}$.
\end{itemize}

Overall, our findings suggest that when automating meaningful human tasks using AI models, accuracy is likely the most important economic factor, outweighing inference costs. Hence, practitioners should typically deploy the most powerful available LLMs.

\section{Background}

\noindent \textbf{Large language models}. Large language models (LLM) are transformer-based deep neural networks that auto-regressively generate ``tokens'' of text, one token at a time (\citealp{vaswani2017}). A model $\mathcal{M}$ parametrizes a probability distribution $p_\theta(x_{t+1} | x_1, ..., x_{t})$ over the next token given all previous tokens. To generate new text, the user provides a \textit{prompt} $(x_1, ..., x_{t})$ and then auto-regressively samples new tokens $x_{t+1}, x_{t+2}, ...$ according to the conditional distribution $p_\theta$. After \textit{post-training} a model's parameters $\theta$ using supervised fine-tuning (\citealp{wei2022}; \citealp{chung2022}) and reinforcement learning (\citealp{ouyang2022}), sampling from $p_\theta$ returns an appropriate response to the prompt. For example, if the prompt $(x_1, ..., x_{t})$ encodes the English sentence ``I want to finish all the research articles I start reading. What can I do to improve my self-discipline?'', auto-regressive sampling from the post-trained $p_\theta$ should yield a list of actionable suggestions.\footnote{You can start today, with this one!}

\noindent \textbf{Systems of large language models}. The development of large language models has proven out theoretical scaling laws (\citealp{kaplan2020}) predicting that better performance reliably follows from simply implementing bigger models (\citealp{brown2020}; \citealp{chowdhery2022}; \citealp{openai2024}). As a result, parameter counts for state-of-the-art LLMs have surged to more than one trillion, requiring significant computing resources for both training and inference. Noting that smaller LLMs often perform well on easier tasks, researchers have proposed system of LLMs in which small and large models collaborate to enhance computational efficiency and reduce inference costs (\citealp{chen2023}). For example, \textit{cascades} (\citealp{ding2024}; \citealp{wang2024}; \citealp{narasimhan2024}) delegate queries from small to large LLMs only if the small LLMs are uncertain about the answer, and \textit{routers} (\citealp{hari2023}; \citealp{hu2024}; \citealp{ong2025}) directly send each query to the smallest available model that can still return a satisfactory answer.

\noindent \textbf{Pareto Optimality}. Multi-objective optimization (\citealp{coello2005}; \citealp{jin2006}) is concerned with minimizing a vector-valued function
\begin{equation}
\label{eq:multiobjective}
    \theta^{*} = \arg \min~ F(\theta),
\end{equation}
where $F(\theta) = (f_1(\theta), ..., f_k(\theta)) \in \mathbb{R}^{k}$. Since the distinct objectives $f_i(\theta)$ generally conflict, we define solutions to (\ref{eq:multiobjective}) with respect to Pareto optimality (\citealp{branke2008}). We say that $(f_1(\theta_1), ..., f_k(\theta_1)$ \textit{Pareto-dominates}  $(f_1(\theta_2), ..., f_k(\theta_2)$, or 
\begin{equation}
    (f_1(\theta_1), ..., f_k(\theta_1) <_\text{P} (f_1(\theta_2), ..., f_k(\theta_2),
\end{equation}
if $f_i(\theta_1) \leq f_i(\theta_2)$ for all $i=1, 2, ..., k$ and at least one of the inequalities is strict. We consider $\theta^{*}$ to be a solution of (\ref{eq:multiobjective}) if there exists no $\theta'$ such that $(f_1(\theta'), ..., f_k(\theta') <_\text{P} (f_1(\theta^{*}), ..., f_k(\theta^{*})$. The set of such solutions to $(\ref{eq:multiobjective})$ makes up the \textit{Pareto frontier}
\begin{equation}
    \mathcal{P} = \{ \theta \in \Theta \mid \forall \theta' \in \Theta, ~F(\theta') \not<_P F(\theta) \}.
\end{equation}
We interchangeably refer to the image $F(\mathcal{P})  = \{ (f_1(\theta), ..., f_k(\theta)) \mid \theta \in \mathcal{P} \} $ as the Pareto frontier. The multi-objective performance of LLMs and LLM systems is typically evaluated by computing the Pareto frontier of the performance metrics $f_1(\theta) := \text{error rate}$ and $f_2(\theta) := \text{cost}$.

When comparing the performance of individual LLMs, $\theta$ represents the model's identity (for example, GPT 4.1 or DeepSeek R1) and other hyperparameters (user and system prompts, sampling configuration, etc.). By contrast, for systems of LLMs, $\theta$ typically denotes the system's operating point, e.g., the deferral rate or confidence threshold for an LLM cascade $\mathcal{M}_\text{small} \rightarrow \mathcal{M}_\text{big}$.

\section{Economic Evaluation of LLMs}

In this section, we describe our economic framework for evaluating the multi-objective performance trade-offs of LLMs (and systems of LLMs).

\subsection{LLMs as Agents}
\label{subsec:llms-as-agents}

Using the language of reinforcement learning, we cast LLMs and systems of LLMs as \textit{agents} that reap per-query \textit{rewards} from their chosen \textit{actions}. As we will show, this methodology provides a natural basis for multi-objective optimization of LLM performance from an economic perspective. Rather than present our framework in the abstract, we illustrate our formalism for three concrete LLM systems: standalone LLMs, cascades (\citealp{chen2023}), and routers (\citealp{shnitzer2023}).

\noindent \textbf{Standalone LLM}. An LLM's action space consists of its possible text generations:
\begin{equation}
    \mathcal{A} = \{ y \mid y \in \Sigma^{*} \},
\end{equation}

where $\Sigma$ is the alphabet of tokens (\citealp{kudo2018}). Following other authors, to evaluate the quality of the output $y$ we assume the existence of a binary error-calling mechanism $s(y): \Sigma^{*} \to \{0,1\}$ that maps each string output to a judgment of whether the output is ``satisfactory.''\footnote{Optionally, the error-calling function $s$ may take into account a reference answer $y_\text{ref}$, leading to a bivariate function $s(y, y_\text{ref}): \Sigma^{*} \times \Sigma^{*} \to \{ 0,1 \}$.} We refer to each $y$ with $s(y) = 1$ as an \textit{error} and use the notation $\mathds{1}_E(y)$ as a shorthand for the indicator $\mathds{1}[s(y) = 1]$.

For each action $y \in \mathcal{A}$, the LLM reaps a reward
\begin{equation}
    r = - \left ( C + \lambda_L L + \lambda_E \mathds{1}_E \right ),
\end{equation}
where $C$ is the dollar cost of generating the output $y$, and $L$ is the latency.

\noindent \textbf{LLM Cascade}. A \textit{cascade} $\mathcal{C} = M_1 \rightarrow M_2 \rightarrow ... \rightarrow M_k$ is a system of LLMs $M_1$, $M_2$, ..., $M_k$ that passes each incoming query from $M_{i}$ to $M_{i+1}$ until it encounters a model $M_\tau$ ($1 \leq \tau \leq k$) with sufficient confidence to answer the query.

We think of the entire cascade as an agent with action space
\begin{equation}
    \mathcal{A} = \{ (\tau, y) \mid 1 \leq \tau \leq k, y \in \Sigma^{*} \},
\end{equation}
where $y$ denotes the cascade's output, and $\tau$ represents the index of the model $M_\tau$ responsible for the cascade's output $y$. Specifically, $\tau$ is the index of the first model that does not defer the query.

As for a standalone LLM, the cascade reaps a reward $r = - \left ( C + \lambda_L L + \lambda_E \mathds{1}_E \right )$ for each action $(\tau, y)$. However, the cost and latency depend on $\tau$:
\begin{equation}
    C = \sum_{j=1}^{\tau} C_j, \qquad L = \sum_{j=1}^{\tau} L_j,
\end{equation}
where $C_j$ and $L_j$ are the cost and latency of model $M_j$ on the query. These equations result from the ``cascading'' nature of a cascade: we pay for each successive model—both in dollar cost and in latency—until we reach the earliest model with sufficient confidence, $M_\tau$.

\noindent \textbf{LLM Router with Abstention}. A \textit{router with abstention} is a function $g(x): \Sigma^{*} \to \{1, 2, ..., k\} \cup \{ \varnothing \} $ that routes each incoming query $x$ to one of $k$ LLMs $M_1$, ..., $M_k$ in a single step, or abstains from answering the query ($\varnothing$). Its action space is
\begin{equation}
    \mathcal{A} = \{ (i, y) \mid i \in \{1, 2, ..., k\}, y \in \Sigma^{*} \} \cup \{ \varnothing \},
\end{equation}
where $(i, y)$ denotes that model $i$ generates output $y$, and $\varnothing$ indicates that the router abstained (for example, to defer the query to a human expert). Analogous to a cascade, the router with abstention reaps a reward $r = - \left[ C + \lambda_L L + \lambda_E \mathds{1}_E(y) + \lambda_A \mathds{1}_A(y) \right]$ for each action, the sole difference being the addition of a term $\mathds{1}_A$ indicating that the router abstained. However, the cost $C$ and latency $L$ are computed differently:
\begin{equation}
    C = c_0 + C_i, \qquad L = l_0 + L_i,
\end{equation}
where $C_i$ and $L_i$ are the cost and latency of generating the output with model $M_i$. The constants $c_0$ and $l_0$ represent the computational overhead of determining the routing decision $g(x) \in \{ 1, 2, ..., k\}$. These costs are typically negligible, as $g$ is usually lightweight compared to the LLMs $M_1$, $M_2$, ..., $M_k$.\footnote{Typically, the routing model takes the form of a deep neural network—for example, a finetuned small language model—with less than 1B parameters (\citealp{shnitzer2023}; \citealp{hari2023}).}

\subsection{Economic Modeling of LLM Use Cases}
\label{subsec:quantifying_user_preferences}

To cast standalone LLMs, cascades, and routers as reward-maximizing agents, we defined the per-query reward
\begin{equation}
\label{eq:per-query-reward-concrete}
    r = - \left[ C + \lambda_L L + \lambda_E \mathds{1}_E + \lambda_A \mathds{1}_A \right],
\end{equation}
where $C$ is the dollar cost of processing a query, $L$ is the latency, $\mathds{1}_E$ indicates an error, and $\mathds{1}_A$ indicates an abstention. In general, we formulate this reward as 
\begin{equation}
\label{eq:per-query-reward-abstract}
    r = - \left[ C + \sum_{\mu \in \mathcal{P}_\text{numeric}} \lambda_\mu ~\mu ~+ \sum_{\chi \in \mathcal{P}_\text{binary}} \lambda_\chi \mathds{1}_{\chi} \right],
\end{equation}
where $\mathcal{P}_\text{numeric} \cup \mathcal{P}_\text{binary}$ is the set of per-query performance metrics: $\mathcal{P}_\textit{numeric}$ represents the numeric performance metrics (for example, latency) and $\mathcal{P}_\textit{binary}$ denotes the binary performance events (for example, error and abstention). This definition includes other reasonable performance objectives, such as privacy (\citealp{zhang2025}).

The coefficients $\{ \lambda_\mu \}_{\mu \in \mathcal{P}_\text{numeric}}, \{ \lambda_\chi \}_{\chi \in \mathcal{P}_\text{binary}} \in \mathbb{R}^{+}$ are \textit{prices} measuring the economic impact when the performance metrics worsen. Table \ref{tbl:lambda_explanation} gives a few key examples:

\begin{table}[ht]
  \centering
  \caption{Examples of key economic parameters in our framework.}
  \label{tbl:lambda_explanation}
  \begin{tabular}{@{}l c p{7.5cm} c@{}}
    \toprule
    \textbf{Parameter} & \textbf{Symbol} & \textbf{Definition} & \textbf{Units} \\
    \midrule
    Price of Error      & $\lambda_E$ & Amount the user is willing to pay to avoid a single prediction error. & \$ \\
    Price of Latency    & $\lambda_L$ & Amount the user is willing to pay to reduce per‑query latency by one second. & \$/sec \\
    Price of Abstention & $\lambda_A$ & Amount the user is willing to pay to avoid a model abstention (no answer). & \$ \\
    \bottomrule
  \end{tabular}
\end{table}

These parameters are based on the economic concept of \textit{indifference} (\citealp{mankiw2020}). For example, the price of error $\lambda_E$ is the lowest dollar figure $d$ at which the user (i.e., the organization that deploys the LLM system) would be indifferent between suffering an LLM error or receiving $d$ dollars in cash.

\subsection{Multi-Objective Performance Evaluation in a Single Number}

Given the user's economic constraints (see Table \ref{tbl:lambda_explanation}), the expected per-query reward of the LLM system is
\begin{equation}
    R(\lambda; \theta) = \mathbb{E}_{\theta} [r(\lambda)],
\end{equation}
where $r$ is the per-query reward (\ref{eq:per-query-reward-abstract}). We use $\lambda$ to denote the totality of economic parameters (e.g., $\lambda_E$, $\lambda_L$, ...), and denote the configuration of the LLM system by $\theta$. Selecting the optimal LLM for a given use case, or optimizing the operating point of a system of LLMs, involves the reward maximization
\begin{equation}
    \label{eq:optimal_theta}
    \theta^{*}(\lambda) = \arg\max_\theta R(\lambda ; \theta).
\end{equation}
When choosing among individual LLMs, $\theta$ represents the identity of the model (e.g., GPT 4.1 vs DeepSeek R1), as well as hyperparameter settings (user and system prompts, sampling temperature, top-p, top-k, etc.). On the other hand, for LLM systems, $\theta$ typically represents tunable parameters such as the confidence thresholds of LLM cascades (\citealp{zellinger2025}).

Often the user's economic constraints $\lambda$ are not known with certainty. In this case, it is instructive to compute optimal models for a range of potential $\lambda$ values. These sensitivity tables can be highly informative, as the optimal model is often stable over a wide range of different economic constraints. See sections \ref{subsec:reasoning_q} and \ref{subsec:reasoning_q} for examples. 

To compare the performance of different LLMs (or systems of LLMs) across different economic scenarios $\lambda$, we consider their expected per-query rewards for the optimal choices of $\theta$:
\begin{equation}
    \label{eq:expected_reward}
    R(\lambda) = R(\lambda; \theta^{*}(\lambda)).
\end{equation}

Alternatively, if $\lambda$ is uncertain—for example, suppose that new legislation may change the expected payout of medical malpractice lawsuits, potentially raising the price of error $\lambda_E$ for medical AI deployments—we model $\lambda$ as a random variable $\lambda \sim p(\lambda)$, yielding the expected per-query reward
\begin{equation}
    \label{eq:stochastic-per-query-reward}
    R = \mathbb{E}_{\lambda \sim p(\lambda)} [R(\lambda)].
\end{equation}

\subsection{Estimating the Price of Error}
\label{subsec:poe_estimate}

Managing the potential cost of LLM mistakes is critical for businesses, especially those in risk-sensitive industries (e.g., finance, law, or medicine).

In this section, we illustrate how to estimate the price of error by walking through an example calculation for medical diagnosis. Practitioners may then adapt these steps to their own industries.

\noindent \textbf{Estimate of $\boldsymbol{\lambda}_\mathbf{E}$ for medical diagnosis.} We estimate the price of error for medical diagnosis to be about \$1,000. We arrive at this number by considering data on medical malpractice lawsuits and applying Bayes' theorem.

For a single diagnosis, denote the event of a medical malpractice lawsuit as $M$ and the event of a diagnostic error as $E$. Our estimate for the price of error is then
\begin{align}
    \hat{\lambda}_E & = \mathbb{E}[\text{Cost} | M] \times \mathbb{P}(M | E) \\
    & = \mathbb{E}[\text{Cost} | M] \times \frac{\mathbb{P}(E|M) \mathbb{P}(M)}{\mathbb{P}(E)}. \label{poe_estimate}
\end{align}

\citet{studdert2006} report that the mean payout for medical malpractice lawsuits is \$485,348, so we use $\mathbb{E}[\text{Cost} | M] = \$500,000$. In addition, two thirds of malpractice suits are derived from a genuine medical error (rather than a fraudulent claim), so $\mathbb{P}(E|M) = 2/3$.

\citet{anupam2011} estimate a doctor's yearly risk of facing a malpractice claim as 7.4\%, so we estimate that a doctor encounters a malpractice suit once every 1/0.074 = 13.5 years. Assuming the doctor makes 3 diagnoses per hour, we arrive at around 100,000 diagnoses within this time frame, so $\mathbb{P}(M) \approx 1/100,000$. By contrast, \citet{singh2014} estimate that 1 in 20 adults experiences a diagnostic error each year. Taking into account the fact that people may go to the hospital more than once per year, and that each visit may involve more than one diagnosis, we use $\mathbb{P}(E) \approx 1/100$.

Plugging these numbers into the formula (\ref{poe_estimate}), we arrive at the estimate \boxed{\hat{\lambda}_E \approx \$333} for medical diagnosis.

\subsection{Connections between Economic Evaluation and Pareto Optimality}

In this section, we establish theoretical connections between our economic evaluation framework and Pareto optimality. Our first result shows that sweeping over different economic scenarios $\lambda$ recovers the full Pareto frontier for the performance metrics, assuming regularity conditions.

\begin{theorem}
\label{thm:pareto_mapping}
Let $\theta^{*}(\lambda)$ be the solution to the reward maximization problem
\begin{equation}
\label{eq:reward_opt}
\begin{aligned}
    \theta^{*} =~ & \text{argmax}_\theta && R(\lambda; \theta),
\end{aligned}
\end{equation}
where $\theta \in \mathbb{R}^{p}$ denotes an LLM system's tunable parameters, and $\lambda$ is the vector of economic costs as defined in Section \ref{subsec:quantifying_user_preferences}. Assume that regularity conditions hold, such that for each $\lambda \in \mathbb{R}_{>0}^{|\mathcal{P}_\text{numeric}| + |\mathcal{P}_\text{binary}|}$ there exist bounds $\{ \gamma_\mu \}_{\mu \in \mathcal{P}_\text{numeric}}$ and $\{ \gamma_\chi \}_{\mu \in \mathcal{P}_\text{binary}} > 0$ such that $\theta^{*}(\lambda)$ is equivalently the solution of the constrained optimization problem
\begin{equation}
\label{eq:constrained_opt_general}
\begin{aligned}
    \theta^{*} =~ & \text{argmin}_\theta && \hat{\mathbb{E}}_\theta[C] \\
    & \text{subject to} && \hat{\mathbb{E}}_\theta[\mu] \leq \gamma_\mu, ~~\mu \in \mathcal{P}_\text{numeric} \\
    & && \hat{\mathbb{E}}_{\theta}[\mathds{1}_{\chi}] \leq \gamma_\chi, ~~\chi \in \mathcal{P}_\text{binary},
\end{aligned}
\end{equation}
and vice versa for $\gamma \mapsto \lambda(\gamma)$. Then the vector of economic costs, $\lambda$, maps surjectively onto the Pareto surface via the mapping
\begin{equation}
\label{eq:pareto_mapping}
    \lambda \mapsto (\hat{\mathbb{E}}_{\theta^{*}(\lambda)}[C], \hat{\mathbb{E}}_{\theta^{*}(\lambda)}[\mu_1], ..., \hat{\mathbb{E}}_{\theta^{*}(\lambda)}[\mu_{|\mathcal{P}_\text{numeric}|}], \hat{\mathbb{P}}_{\theta^{*}(\lambda)}[\chi_1], ..., \hat{\mathbb{P}}_{\theta^{*}(\lambda)}[\chi_{|\mathcal{P}_\text{binary}|}]).
\end{equation}
\end{theorem}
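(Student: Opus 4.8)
The plan is to exploit the fact that the reward‑maximization problem (\ref{eq:reward_opt}) is exactly a positively‑weighted scalarization of the vector of performance metrics. Writing
\[
R(\lambda;\theta) = -\left( \hat{\mathbb{E}}_\theta[C] + \sum_{\mu \in \mathcal{P}_\text{numeric}} \lambda_\mu\, \hat{\mathbb{E}}_\theta[\mu] + \sum_{\chi \in \mathcal{P}_\text{binary}} \lambda_\chi\, \hat{\mathbb{E}}_\theta[\mathds{1}_\chi] \right),
\]
maximizing $R(\lambda;\cdot)$ is the same as minimizing the inner product of the attainable metric vector $\Phi(\theta) := (\hat{\mathbb{E}}_\theta[C], \hat{\mathbb{E}}_\theta[\mu_1], \dots, \hat{\mathbb{P}}_\theta[\chi_{|\mathcal{P}_\text{binary}|}])$ with the \emph{strictly positive} weight vector $(1, \lambda_{\mu_1}, \dots, \lambda_{\chi_{|\mathcal{P}_\text{binary}|}})$. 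I would then establish the two inclusions ``image of $\lambda \mapsto \Phi(\theta^{*}(\lambda)) \subseteq$ Pareto surface'' and ``Pareto surface $\subseteq$ image'' separately; together they give surjectivity onto the Pareto surface.

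For the forward inclusion I would run the standard scalarization argument. Suppose $\Phi(\theta^{*}(\lambda))$ were Pareto‑dominated by $\Phi(\theta')$ for some $\theta'$. Then every coordinate of $\Phi(\theta')$ is no larger than the corresponding coordinate of $\Phi(\theta^{*}(\lambda))$, with at least one strictly smaller; since all weights $1, \lambda_\mu, \lambda_\chi$ are strictly positive (the theorem restricts to $\lambda \in \mathbb{R}_{>0}^{|\mathcal{P}_\text{numeric}|+|\mathcal{P}_\text{binary}|}$), the weighted sum strictly decreases, i.e.\ $R(\lambda;\theta') > R(\lambda;\theta^{*}(\lambda))$, contradicting optimality of $\theta^{*}(\lambda)$. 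Hence $\Phi(\theta^{*}(\lambda))$ lies on the Pareto surface for every $\lambda$.

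For the reverse inclusion, fix any point $p^{*} = \Phi(\theta_0)$ on the Pareto surface and set the constraint bounds $\gamma_\mu := \hat{\mathbb{E}}_{\theta_0}[\mu]$ and $\gamma_\chi := \hat{\mathbb{P}}_{\theta_0}[\chi]$. I claim $\theta_0$ solves the constrained program (\ref{eq:constrained_opt_general}) with these bounds: it is feasible by construction, and if some feasible $\theta'$ had $\hat{\mathbb{E}}_{\theta'}[C] < \hat{\mathbb{E}}_{\theta_0}[C]$, then $\Phi(\theta')$ would weakly beat $\Phi(\theta_0)$ on every metric and strictly beat it on the cost coordinate, contradicting Pareto optimality of $p^{*}$. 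Now invoke the assumed regularity (the ``vice versa for $\gamma \mapsto \lambda(\gamma)$'' clause): there is a strictly positive price vector $\lambda(\gamma)$ whose reward‑maximizer solves exactly this constrained program, and hence attains the metric vector $p^{*}$. Therefore $p^{*}$ is in the image, and combining the two inclusions yields the claimed surjectivity.

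The main obstacle is essentially absorbed into the phrase ``assume that regularity conditions hold'': the real content is the stated equivalence between the scalarized problem (\ref{eq:reward_opt}) and the constrained problem (\ref{eq:constrained_opt_general}) \emph{together with} its converse, which is a strong‑duality / shadow‑price statement—typically requiring convexity (or convexifiability via randomized operating points) of the attainable metric region, a Slater‑type constraint qualification, and monotone dependence of the optimal cost on $\gamma$ so that the binding constraints pin down strictly positive multipliers $\lambda$. I would state explicitly which of these I assume rather than derive them. A secondary bookkeeping point is the strict‑versus‑weak inequality in the definition of Pareto dominance: in both inclusions it is the cost coordinate that supplies the required strict inequality, and one should note that non‑uniqueness of $\theta^{*}(\lambda)$ (or of the constrained minimizer) is harmless because the map in (\ref{eq:pareto_mapping}) only records the metric vector $\Phi(\theta^{*}(\lambda))$, not $\theta^{*}(\lambda)$ itself.
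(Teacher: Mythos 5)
Your proof is correct and follows essentially the same two-inclusion strategy as the paper's Appendix~B proof: a positive-weight scalarization contradiction for the forward inclusion, and passage through the constrained program via the assumed ``vice versa'' equivalence for the reverse inclusion. The one place where the paper is more careful is the step you gloss as ``and hence attains the metric vector $p^{*}$'': after invoking $\lambda(\gamma)$ whose reward-maximizer solves the constrained program, the paper explicitly argues that \emph{any} solution $\theta^{*}_c$ of that constrained program must have metric vector equal to the Pareto point $x$ --- it is feasible so $\Phi(\theta^{*}_c) \le x$ componentwise once the cost components are shown equal, and strict inequality in any remaining component would make $\Phi(\theta^{*}_c)$ dominate $x$ --- which is precisely the short argument needed to turn your closing remark that non-uniqueness is ``harmless'' into a justified claim rather than an assertion.
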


\begin{proof}
See Appendix B.
\end{proof}

The next result provides theoretical support for evaluating the overall performance of LLM systems by comparing the expected reward (\ref{eq:expected_reward}) across a grid of possible use cases $\lambda$.

\begin{theorem}[]
Consider two LLM systems, $\mathcal{S}_1$ and $\mathcal{S}_2$. Assume that the regularity assumptions of Theorem \ref{thm:pareto_mapping} hold. If the expected rewards (\ref{eq:expected_reward}) satisfy
\begin{equation*}
    R_1(\lambda) \geq R_2(\lambda)
\end{equation*}
for all $\lambda \in \mathbb{R}_{>0}^{|\mathcal{P}_\text{numeric}| + |\mathcal{P}_\text{binary}|}$, then no point on the Pareto surface for $\mathcal{S}_1$ dominates any point on the Pareto surface for $\mathcal{S}_2$.
\end{theorem}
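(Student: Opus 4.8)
The plan is to argue by contraposition, leveraging the fact that, once the economic prices $\lambda$ are fixed, the expected reward is a \emph{linear} functional of the system's vector of expected performance metrics. Expanding (\ref{eq:per-query-reward-abstract}) and taking expectations gives $R(\lambda;\theta) = -\langle w(\lambda),\, v(\theta)\rangle$, where $v(\theta) = (\hat{\mathbb{E}}_\theta[C], \hat{\mathbb{E}}_\theta[\mu_1], \ldots, \hat{\mathbb{P}}_\theta[\chi_1], \ldots)$ is exactly the tuple appearing in the Pareto mapping (\ref{eq:pareto_mapping}), and the weight vector $w(\lambda) = (1, \lambda_{\mu_1}, \ldots, \lambda_{\chi_1}, \ldots)$ has \emph{strictly positive} coordinates for every $\lambda \in \mathbb{R}_{>0}^{|\mathcal{P}_\text{numeric}| + |\mathcal{P}_\text{binary}|}$. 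Hence $R(\lambda) = -\min_\theta \langle w(\lambda), v(\theta)\rangle$, and since a strictly-positively-weighted linear functional attains its minimum over a set at one of its Pareto-minimal elements, the minimum can be taken over the system's Pareto surface alone; thus $R_i(\lambda) = -\min_v \langle w(\lambda), v\rangle$ with the minimum over the Pareto surface of $\mathcal{S}_i$, for $i = 1,2$. I would then suppose, toward a contradiction, that some point $p^{*}$ on the Pareto surface of $\mathcal{S}_1$ satisfies $p^{*} <_\text{P} q^{*}$ for some point $q^{*}$ on the Pareto surface of $\mathcal{S}_2$, and try to exhibit a scenario $\lambda$ with $R_1(\lambda) < R_2(\lambda)$.

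The device for producing that scenario is Theorem \ref{thm:pareto_mapping}: under the stated regularity conditions, for each system the map $\lambda \mapsto v(\theta^{*}(\lambda))$ is surjective onto that system's Pareto surface, so every Pareto point is \emph{pinned} as a reward-maximizer of its own system at some strictly positive $\lambda$. Applying this, I would pick $\lambda^{\dagger} > 0$ at which the relevant Pareto point is reward-optimal for its system, so that system's reward there equals $-\langle w(\lambda^{\dagger}), \cdot\rangle$ evaluated at that point. Because $p^{*} <_\text{P} q^{*}$ means $p^{*} \le q^{*}$ coordinatewise with at least one strict inequality, pairing with the strictly positive vector $w(\lambda^{\dagger})$ converts this into the strict scalar inequality $\langle w(\lambda^{\dagger}), p^{*}\rangle < \langle w(\lambda^{\dagger}), q^{*}\rangle$. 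Feeding this, together with the feasibility of the dominating point as a competitor for its own system and the reformulation $R_i(\lambda) = -\min_v \langle w(\lambda), v\rangle$, into the two reward values at $\lambda^{\dagger}$ should yield a strict gap between $R_1(\lambda^{\dagger})$ and $R_2(\lambda^{\dagger})$ in the direction that contradicts $R_1(\lambda) \ge R_2(\lambda)$. I would close with two checks the regularity assumptions underwrite: that the achievable performance sets are closed, so the minima above and the surjectivity in Theorem \ref{thm:pareto_mapping} are genuinely attained; and that ties in the linear functional cannot neutralize the strict inequality, which they cannot, since $p^{*}$ and $q^{*}$ already differ strictly in a coordinate carrying positive weight.

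The step I expect to be the main obstacle is fixing the \emph{orientation} of that last deduction: deciding which system's surjectivity to invoke when placing $\lambda^{\dagger}$, and on which side to use reward-\emph{optimality} rather than mere feasibility, so that the strict reward gap lands in the direction that contradicts $R_1 \ge R_2$ instead of the benign direction that is merely consistent with it. The difficulty is intrinsic to the economic reward: since it aggregates the performance metrics with strictly positive weights, a Pareto point of one system that Pareto-dominates a Pareto point of the other will, at the scenarios one tests naively, tend to \emph{raise} the first system's reward relative to the second; so the contradiction has to be extracted by pinning the \emph{dominated} point as its own system's reward-maximizer at $\lambda^{\dagger}$ and then showing that the hypothesis $R_1 \ge R_2$ cannot hold at $\lambda^{\dagger}$ once the dominating point is known to lie on the other system's Pareto surface. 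Confirming that such a $\lambda^{\dagger}$ exists, that it forces the needed contradiction for the orientation as phrased, and that no degenerate scenario (a zero-weight coordinate, or a non-attained infimum) can arise under the regularity assumptions, is the crux; the surrounding manipulations---expanding the reward and chaining the inequalities---are mechanical.
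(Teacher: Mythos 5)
Your setup is exactly right---the reward is the negative of a strictly-positively-weighted linear functional of the performance vector, the maximization can be restricted to the Pareto surface, and Theorem \ref{thm:pareto_mapping} pins any Pareto point as its own system's reward-maximizer at some $\lambda^{\dagger} > 0$. But the obstacle you flag at the end is not a technicality left to work out; it is fatal to the statement as literally written, and no choice of orientation rescues it. If $p^{*}$ on the Pareto surface of $\mathcal{S}_1$ dominates $q^{*}$ on the Pareto surface of $\mathcal{S}_2$, then at every $\lambda$---including the $\lambda^{\dagger}$ that pins $q^{*}$ as $\mathcal{S}_2$'s maximizer---your own machinery gives
\begin{equation*}
R_1(\lambda^{\dagger}) \;\geq\; -\langle w(\lambda^{\dagger}), p^{*}\rangle \;>\; -\langle w(\lambda^{\dagger}), q^{*}\rangle \;=\; R_2(\lambda^{\dagger}),
\end{equation*}
which is \emph{consistent} with the hypothesis $R_1 \geq R_2$, never contrary to it. Feasibility always lower-bounds the better system's reward and optimality gives an exact value for the other, and both push the same way; a system that is everywhere at least as good is positively expected to have Pareto points dominating those of the other system. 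So there is no $\lambda^{\dagger}$ that yields the contradiction you are searching for.

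What is provable---and what the paper's own proof actually establishes---is the transposed claim: no point on the Pareto surface of $\mathcal{S}_2$ dominates any point on the Pareto surface of $\mathcal{S}_1$. The paper negates that, uses the surjectivity of Theorem \ref{thm:pareto_mapping} to pin the \emph{dominated} point $\theta_1$ of the \emph{better} system as $\theta^{*}(\lambda_1)$, so that $R_1(\lambda_1) = R(\lambda_1;\theta_1)$ exactly, and then lets the dominating point $\theta_2$ serve merely as a feasible competitor for $\mathcal{S}_2$: dominance plus positive weights gives $R(\lambda_1;\theta_1) < R(\lambda_1;\theta_2) \leq R_2(\lambda_1)$, contradicting $R_1 \geq R_2$. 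Your linear-functional argument proves this version verbatim once you swap the roles of the two systems in your contradiction hypothesis. The honest verdict is therefore that your proof is incomplete because the orientation you were trying to force cannot be forced: the theorem statement and the paper's proof have the two systems interchanged, and the correct target is the one your approach handles immediately.
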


\begin{proof}
Suppose for the sake of contradiction that $\theta_2$ on the Pareto surface of $\mathcal{S}_2$ dominates $\theta_1$ on the Pareto surface of $\mathcal{S}_1$. By Theorem \ref{thm:pareto_mapping}, there exist $\lambda_1$, $\lambda_2 \in \mathbb{R}^{+}$ such that
\begin{align}
    \theta_1 & = \theta^{*}(\lambda_1), \\
    \theta_2 & = \theta^{*}(\lambda_2),
\end{align}
as defined by (\ref{eq:optimal_theta}). Hence, we have
\begin{equation}
    R_1(\lambda_1) = R_1(\lambda_1; \theta^{*}(\lambda_1)) = R(\lambda_1; \theta_1) < R_2(\lambda_1; \theta_2) \leq \max_\theta R_2(\lambda_1; \theta) = R_2(\lambda_1),
\end{equation}
where the middle inequality follows from the assumed Pareto dominance of $\theta_2$ over $\theta_1$.
\end{proof}

\section{Experiments}

We apply our economic evaluation framework to explore the practical relevance of less powerful LLMs. We address two concrete questions:
\begin{itemize}
    \item When do reasoning models outperform non-reasoning models? (Section \ref{subsec:reasoning_q})
    \item When does a single large model $\mathcal{M}_\text{big}$ outperform a cascade $\mathcal{M}_\text{small} \rightarrow \mathcal{M}_\text{big}$? (Section \ref{subsec:cascade_q})
\end{itemize}

\subsection{Methodology}

Since AI models are increasingly considered as a replacement for meaningful human labor, we focus our analysis on difficult mathematics questions from the MATH benchmark (\citealp{hendrycks2021}). This benchmark offers the advantage of containing ground truth difficulty labels (1-5) as well as reference answers. To simplify evaluation of LLM answers, we filter out questions with non-numeric answers and use stratified sampling to obtain 500 questions for each of the three difficulty levels 1, 3, and 5. We exclusively use the training split of the MATH benchmark; although this choice heightens the potential for data contamination (\citealp{ravaut2025}), it makes available a greater number of difficult examples and therefore improves the statistical power of our experiments.

\noindent \textbf{Models}. We evaluate six state-of-the-art LLMs (three reasoning, three non-reasoning): Meta's Llama3.3 70B and Llama3.1 405B models (\citealp{grattafiori2024}), OpenAI's GPT4.1 and o3 models, DeepSeek's R1 model (\citealp{deepseek2025}), and Alibaba's Qwen3 235B-A22 model (\citealp{yang2025}). We prompt each model using zero-shot chain-of-thought (\citealp{kojima2022}).

\noindent \textbf{LLM cascades}. To evaluate cascades $\mathcal{M}_\text{small} \rightarrow \mathcal{M}_\text{big}$, we quantify the small model's probability of correctness using self-verification, also known as \textit{P(True)} (\citealp{kadavath2022}; \citealp{zellinger2025}). When comparing the performance of $\mathcal{M}_\text{small} \rightarrow \mathcal{M}_\text{big}$ against that of $\mathcal{M}_\text{big}$ in Section \ref{subsec:cascade_q}, we split the data into 50\% training and test sets; we use the training set exclusively for estimating the optimal confidence threshold, and evaluate cascade performance on the test split ($n=250$).

\noindent \textbf{Metrics}. We measure the correctness, dollar cost, and latency for each query by invoking LLMs via the commercial Fireworks AI (Llama3.1, Llama3.3, Qwen3, DeepSeek R1) and OpenAI (GPT 4.1, o3) application programming interfaces.

\noindent \textbf{Correctness}. To assess correctness of a model's answer, we invoke Llama3.1 405B with an evaluation prompt containing the ground truth reference answer.

\noindent \textbf{Cost}. We compute
\begin{equation}
    C = N_\text{in} \times C_\text{in} + N_\text{out} \times C_\text{out},
\end{equation}
where $N_\text{in}, N_\text{out}$ are the numbers of input and output tokens, and $C_\text{in}, C_\text{out}$ are the API providers' model-specific prices, expressed in dollars per token.

\noindent \textbf{Latency}. We record the time before and after an API call to the LLM model provider. Hence, our reported latencies include internet roundtrip latency. However, this additional latency ($<$300ms) is negligible, being 10-200x smaller than the latencies we observe for answering queries.

\noindent \textbf{Cascade Error Reduction}: To predict the performance of a cascade $\mathcal{M}_\text{small} \rightarrow \mathcal{M}_\text{big}$ based on the quality of the confidence signal of $\mathcal{M}_\text{small}$ (self-verification in our case), we introduce the \textit{cascade error reduction}
\begin{equation}
    \text{CER} = \text{Cov}(\mathds{1}_D, \mathds{1}_\text{error}^{\mathcal{M}_\text{small}}),
\end{equation}
where $\mathds{1}_D$ indicates the small model's decision to defer the query to $\mathcal{M}_\text{big}$, and $\mathds{1}_\text{error}^{\mathcal{M}_\text{small}}$ indicates that the output of $\mathcal{M}_\text{small}$ is incorrect. We theoretically justify this metric in Theorem \ref{thm:cascade_error}.

See appendices C-E for more details on methodology.

\subsection{Baseline Performance: Error Rate, Cost, and Latency}

Figure \ref{fig:baseline_performance_difficulty_5} shows the performance of reasoning and non-reasoning models on $n=500$ of the most difficult questions of the MATH benchmark. Clearly, reasoning models have much lower error rates. However, their costs per query are 10-100x greater, and latencies per query are up to 10x greater.

\begin{figure}[htbp]
    \centering

    \begin{subfigure}[b]{0.32\textwidth}
        \centering
        \includegraphics[width=\textwidth]{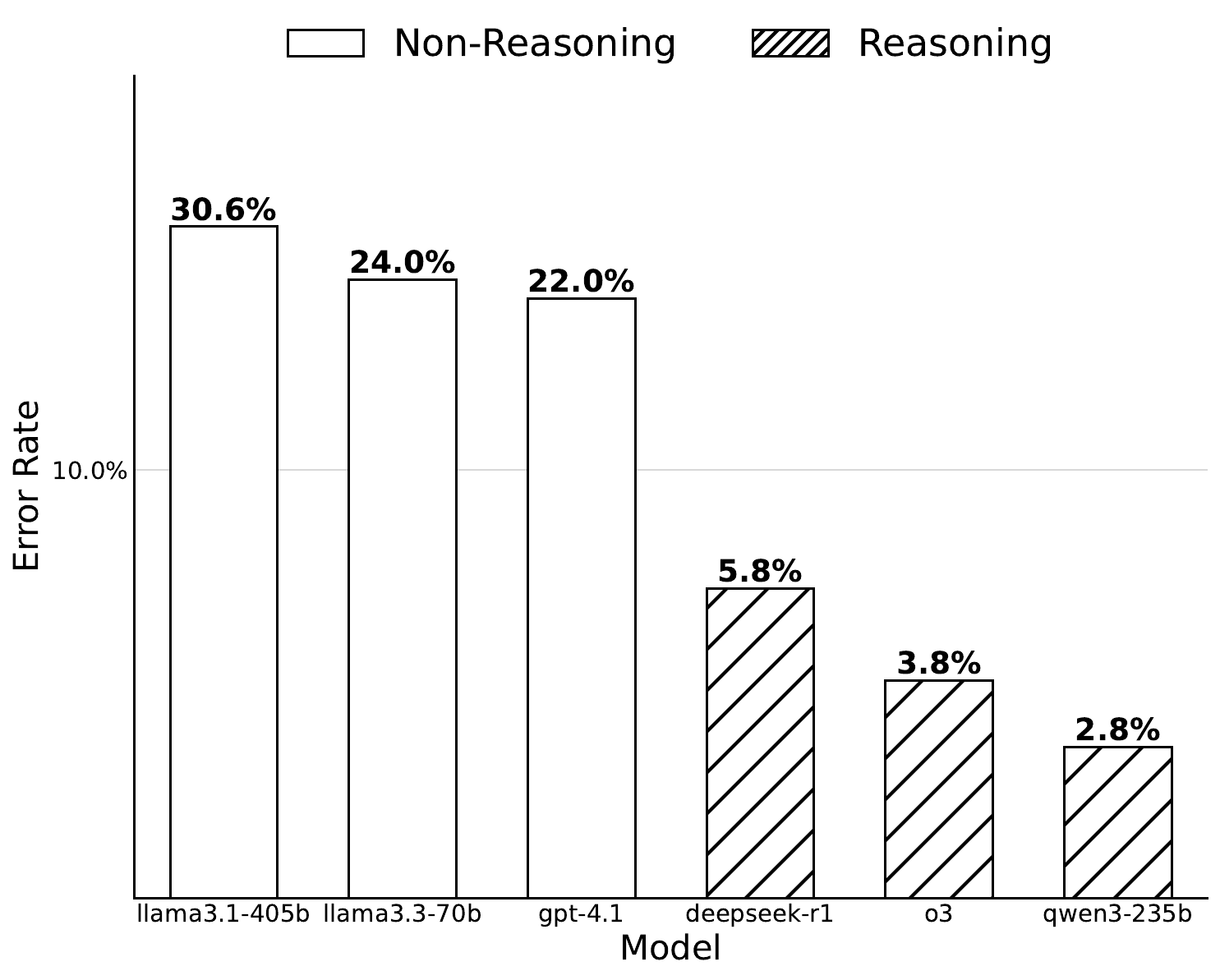}
        \caption{Error}
        \label{fig:baseline_performance_difficulty_5_error}
    \end{subfigure}
    \hfill
    \begin{subfigure}[b]{0.32\textwidth}
        \centering
        \includegraphics[width=\textwidth]{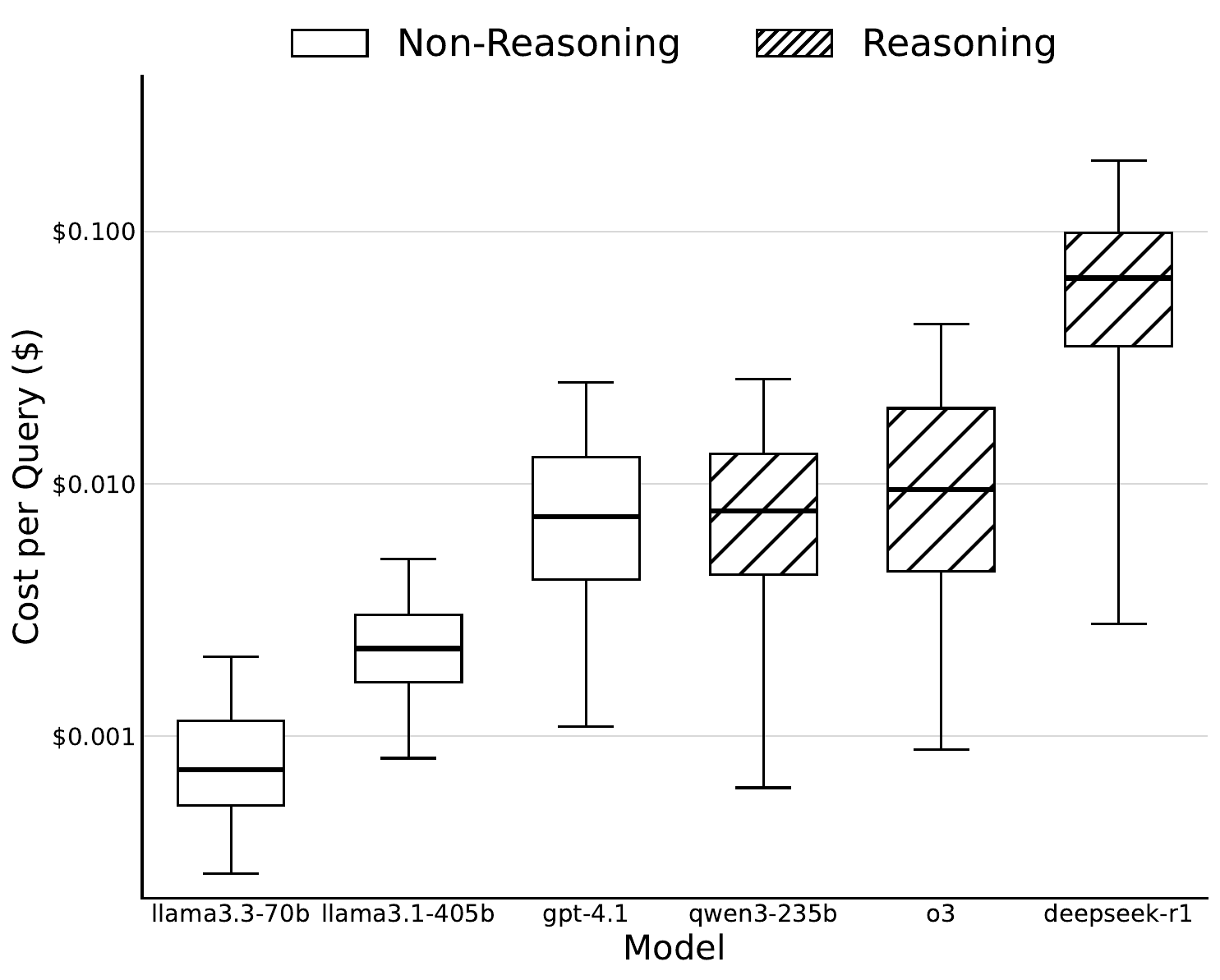}
        \caption{Cost per Query (\$)}
        \label{fig:baseline_performance_difficulty_5_cost}
    \end{subfigure}
    \hfill
    \begin{subfigure}[b]{0.32\textwidth}
        \centering
        \includegraphics[width=\textwidth]{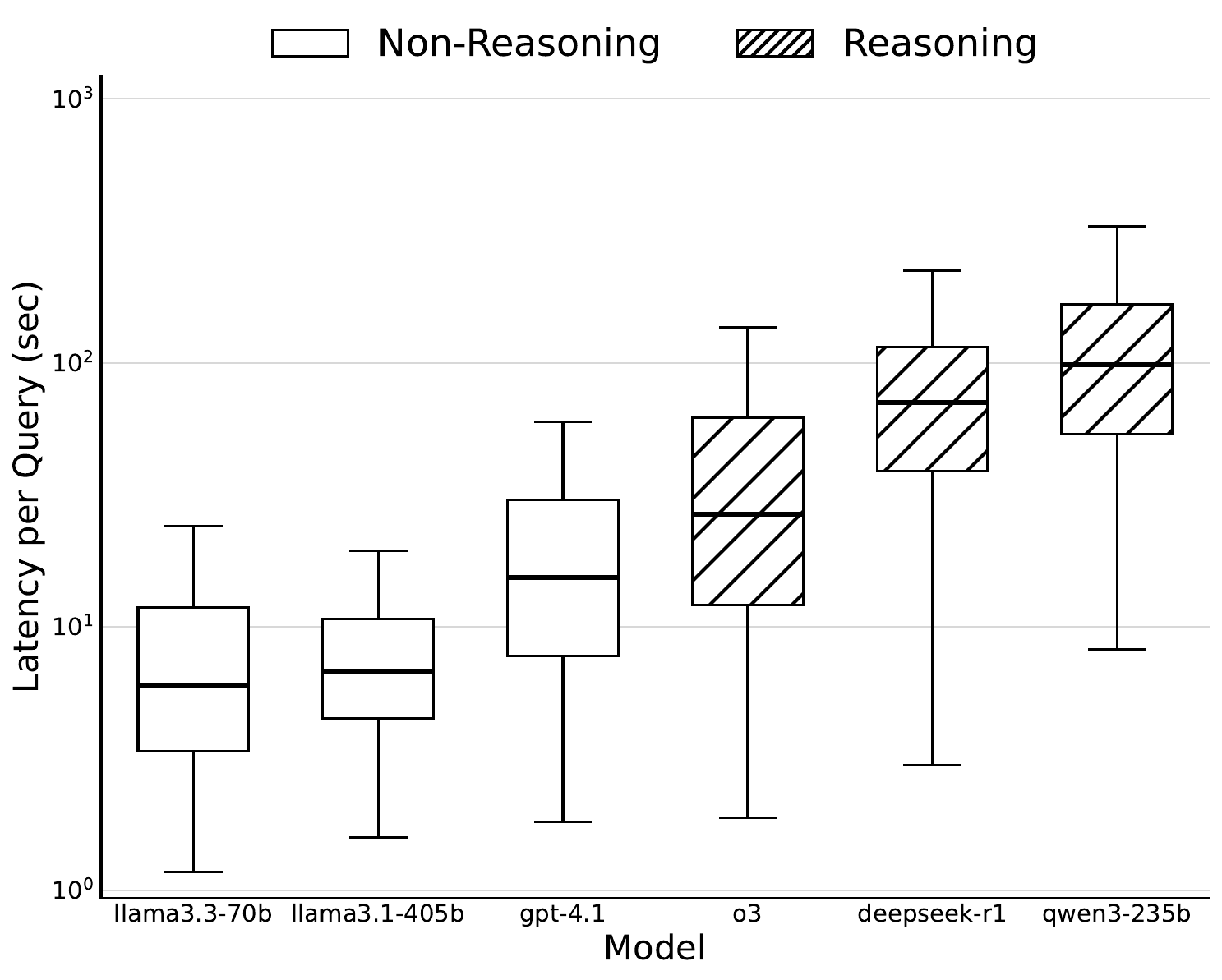}
        \caption{Latency per Query (sec)}
        \label{fig:baseline_performance_difficulty_5_latency}
    \end{subfigure}

    \caption{On the most difficult questions of the MATH benchmark, reasoning models have much lower error rates but are 10-100x more expensive and take 10x longer to answer a query.}
    \label{fig:baseline_performance_difficulty_5}
\end{figure}

Reasoning models are known to dynamically scale the number of output tokens based on the difficulty of the query. However, non-reasoning models prompted with chain-of-thought also adapt the number of output tokens based on query difficulty. Figure \ref{fig:baseline_performance_tokens} in Appendix A compares the number of output tokens across queries of varying difficulty, showing that the relative increase in output tokens is comparable between reasoning and non-reasoning models; however, reasoning models have higher baseline numbers of output tokens.

\subsection{When Do Reasoning Models Outperform Non-Reasoning Models?}
\label{subsec:reasoning_q}

We now leverage our economic evaluation framework to determine under what conditions reasoning models outperform non-reasoning models on difficult questions from MATH.

First, we only trade-off accuracy and cost, disregarding latency. Second, we simultaneously consider accuracy, cost, and latency, and display the optimal LLM over a range of economic constraints.

Figure \ref{fig:reasoning_vs_nonreasoning} plots the expected reward (\ref{eq:expected_reward}) of reasoning and non-reasoning LLMs on MATH, for prices of error ranging from $\lambda_E = \$0.0001$ to $\lambda_E = \$10,000$ per query. These curves only trade-off accuracy and cost, assuming that latency costs nothing ($\lambda_L = \$0/\text{sec}$). Each curve shows the average expected reward across all models of one category (reasoning or non-reasoning).

\begin{figure}[htbp]
    \centering

    \begin{subfigure}[b]{0.44\textwidth}
        \centering
        \includegraphics[width=\textwidth]{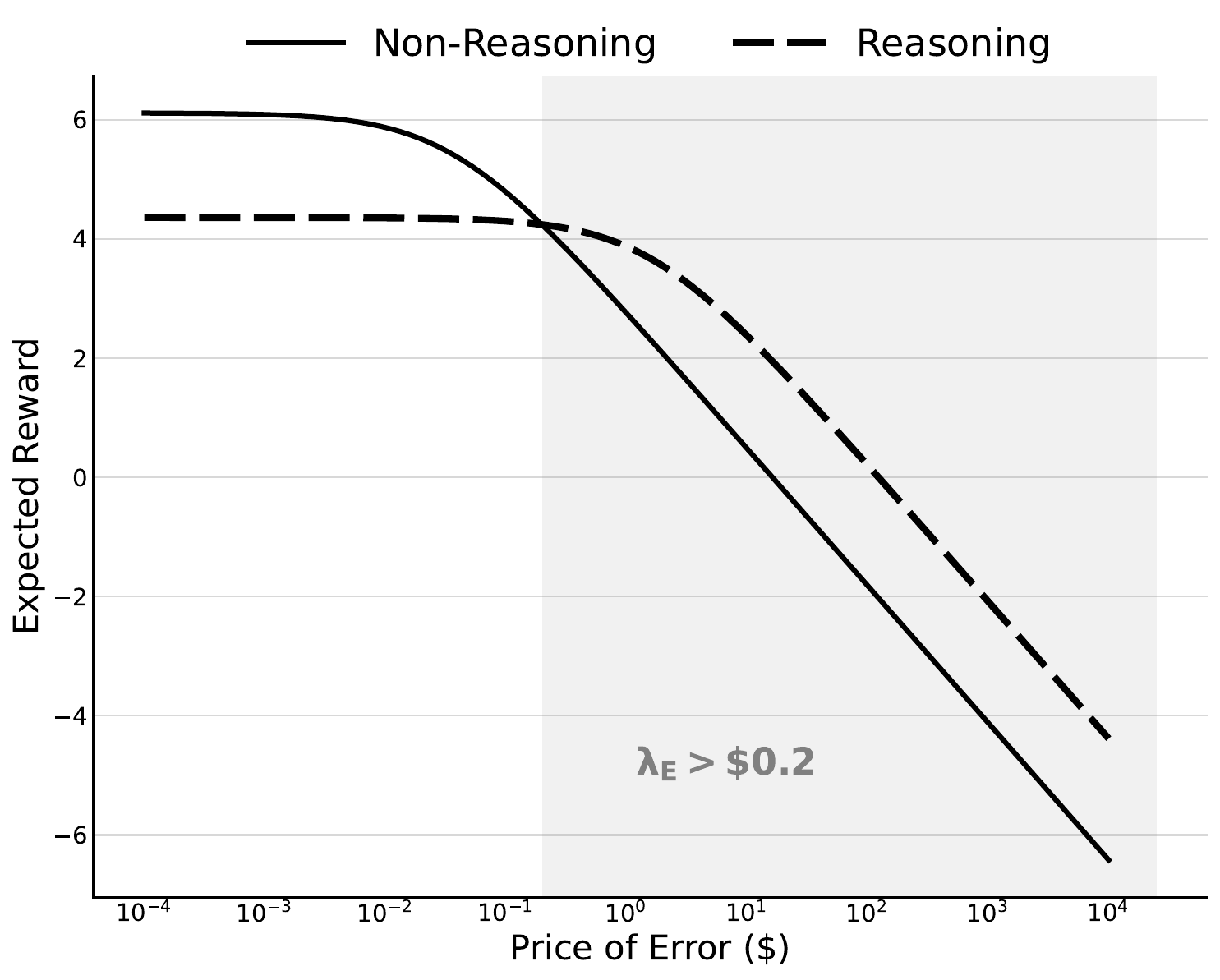}
        \caption{Difficulty: Level 3}
        \label{fig:reasoning_vs_nonreasoning_w_difficulty_3}
    \end{subfigure}
    \hfill
    \begin{subfigure}[b]{0.44\textwidth}
        \centering
        \includegraphics[width=\textwidth]{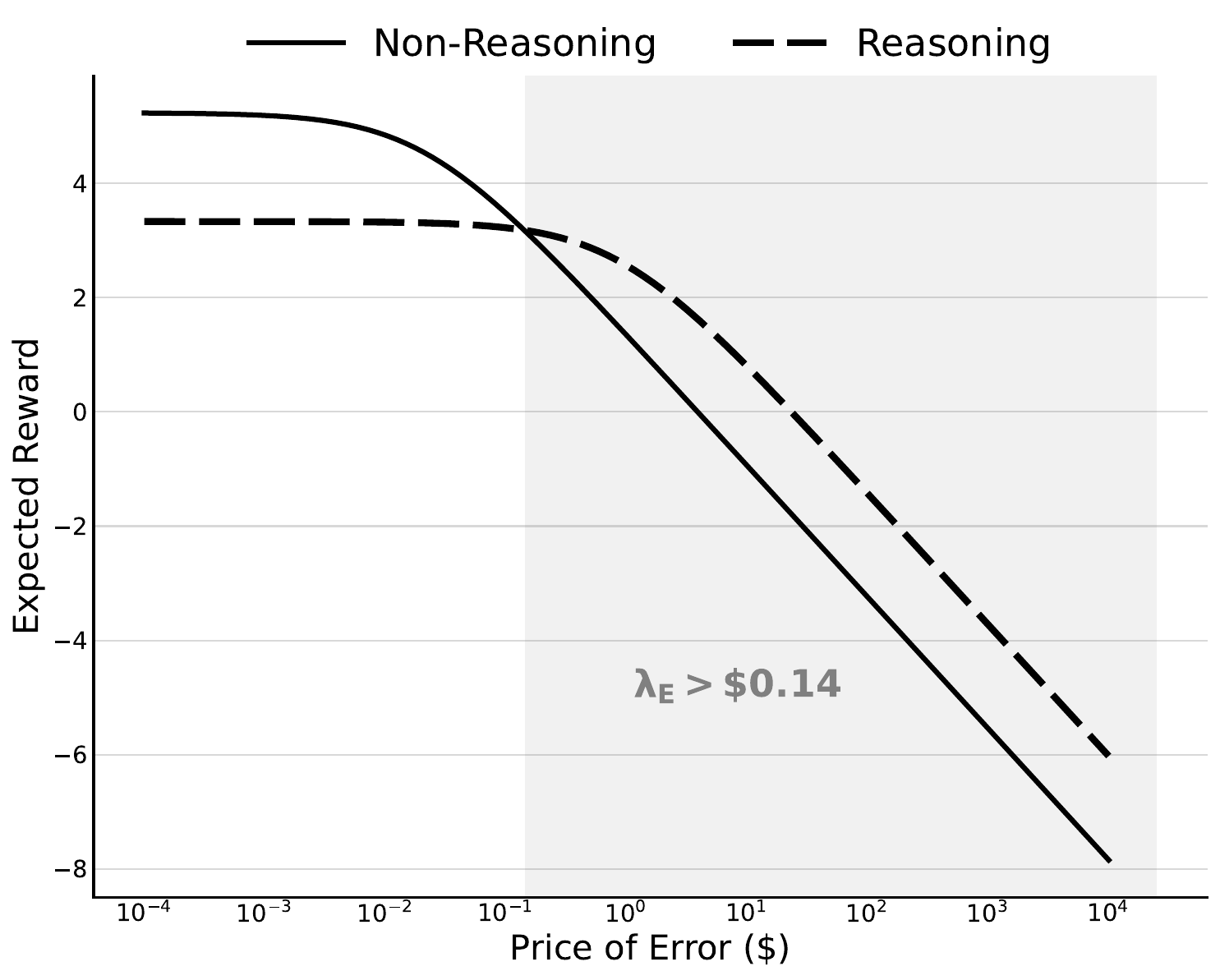}
        \caption{Difficulty: Level 5}
        \label{fig:reasoning_vs_nonreasoning_difficulty_5}
    \end{subfigure}

    \caption{Reasoning models offer superior accuracy-cost trade-offs as soon as the price of error, $\lambda_E$, exceeds \$0.20 per query. The y-axis shows -log(-$R(\lambda_E)$) to make the trends more easily visible.}
    \label{fig:reasoning_vs_nonreasoning}
\end{figure}

The results show that reasoning models offer superior accuracy-cost trade-offs as soon as the cost of making a mistake exceeds \$0.20. This figure is surprisingly low. Suppose a human worker is able to complete the same task in 5 minutes on average, and assume that the consequence of a mistake is having to re-do the task. Then the economic loss of a single mistake exceeds $\$0.20$ as soon as the worker's wages exceed \$2/hour—a number below the U.S. federal minimum wage.

Figure \ref{fig:reasoning_vs_nonreasoning_w_latency} introduces a non-zero price of latency and displays the optimal models across economic constraints, with prices of error $\lambda_E$ ranging from \$0.0001 to \$10,000, and prices of latency $\lambda_L$ ranging from \$0/minute to \$10/minute. These prices of latency correspond to human wages from \$0/hour to \$600/hour. Hence, we believe this range captures a wide variety of use cases for automating meaningful human tasks, ranging from customer support (below \$100/hour) to medical diagnosis (above \$100/hour). We note that this regime does \textbf{not} include the more stringent latency constraints of using LLMs for non-human tasks such as serving popular web applications (\citealp{kohavi2007}) or iterating through a large number of database records. We leave exploration of such tasks—and their higher prices of latency—to future work, as we are most concerned with the emerging practice of automating human tasks using LLMs.

\begin{figure}[htbp]
    \centering

    \begin{subfigure}[b]{0.49\textwidth}
        \centering
        \includegraphics[width=\textwidth]{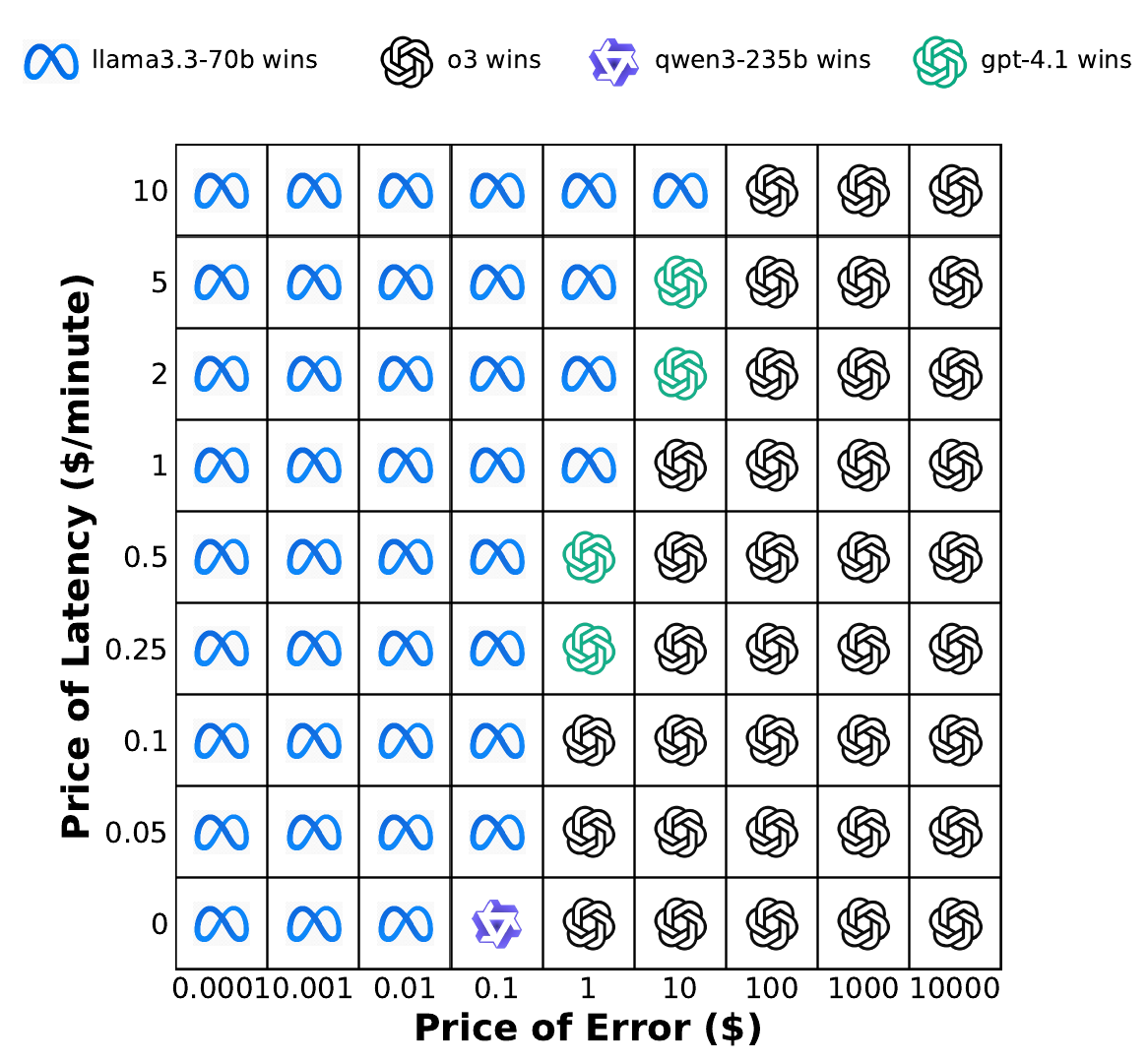}
        \caption{Difficulty: Level 3}
        \label{fig:reasoning_vs_nonreasoning_w_latency_difficulty_3}
    \end{subfigure}
    \hfill
    \begin{subfigure}[b]{0.42\textwidth}
        \centering
        \includegraphics[width=\textwidth]{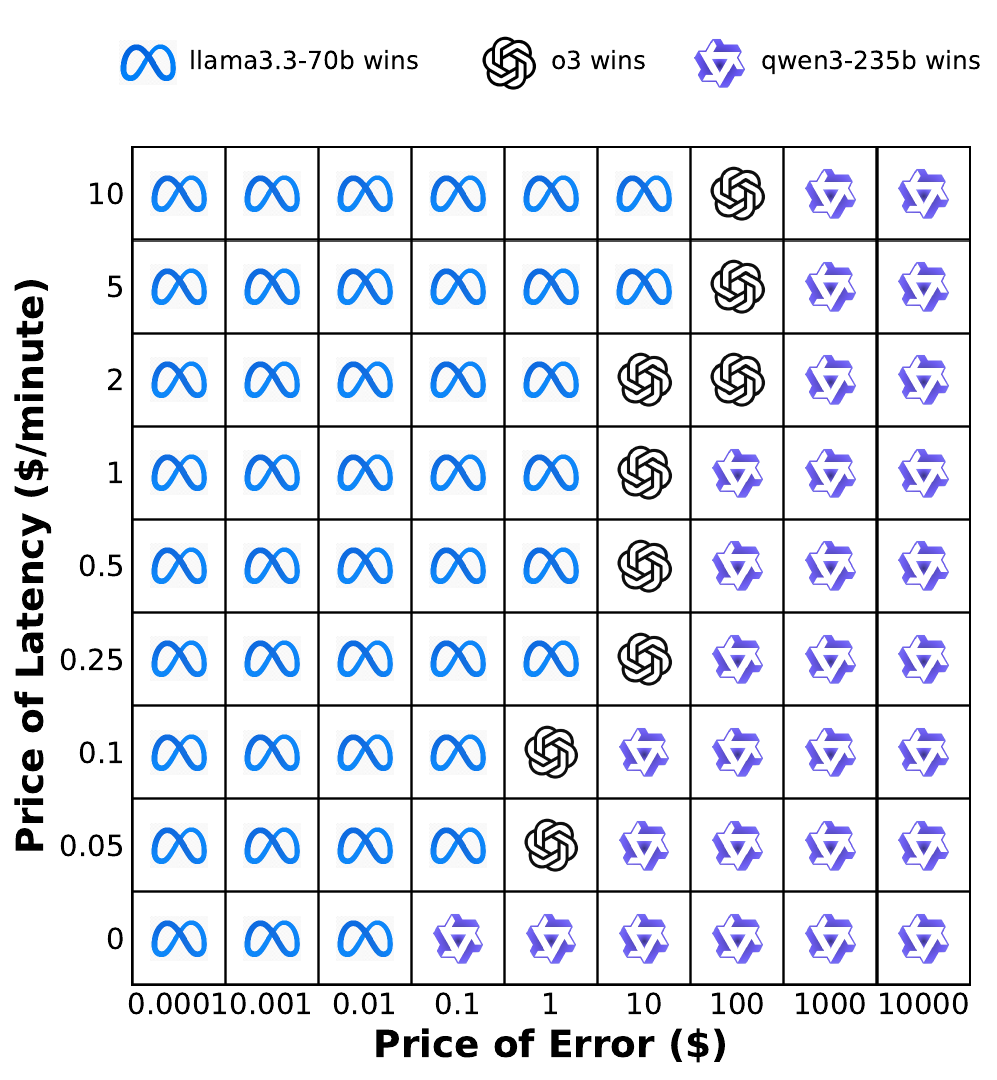}
        \caption{Difficulty: Level 5}
        \label{fig:reasoning_vs_nonreasoning_w_latency_difficulty_5}
    \end{subfigure}

    \caption{Optimal models for different combinations of \textit{price of error} and \textit{price of latency}. Reasoning models show superior performance for prices of error above \$10.}
    \label{fig:reasoning_vs_nonreasoning_w_latency}
\end{figure}

Figure \ref{fig:reasoning_vs_nonreasoning_w_latency} shows that reasoning models generally outperform non-reasoning models for prices of error above \$10 when the price of latency is at most \$5/minute (equivalent to human wages of \$300/hour). For a price of latency of \$10/minute (or \$600/hour), the critical price of error rises to \$100. Among LLMs, Qwen3-235B-A22B, o3, and Llama3.3 70B emerge as the preferred models for the vast majority of economic scenarios.

\subsection{When Does a Single Big Model Outperform a Cascade?}
\label{subsec:cascade_q}

A large language model cascade $\mathcal{M}_\text{small} \rightarrow \mathcal{M}_\text{big}$ sends all queries first to a relatively small model $\mathcal{M}_\text{small}$. If $\mathcal{M}_\text{small}$ is uncertain about the answer, it defers the query to $\mathcal{M}_\text{big}$; otherwise, it directly returns the output. Cascading generally assumes that $\mathcal{M}_\text{big}$ performs better than $\mathcal{M}_\text{small}$ on all queries; hence, we expect that as the price of error $\lambda_E$ increases, there exists a cross-over point $\lambda_E^\text{critical}$ when directly sending queries to the big model $\mathcal{M}_\text{big}$ outperforms cascading.

Figure \ref{fig:single_vs_cascade} compares the performance of three different cascades $\mathcal{M}_\text{small} \rightarrow \mathcal{M}_\text{big}$. For each cascade, $\mathcal{M}_\text{big} = \text{Qwen3 235B-A22B}$, but $\mathcal{M}_\text{small}$ ranges over all the non-reasoning models (Llama3.3 70B, Llama3.1 405B, and GPT-4.1). We evaluate only on the most difficult (level 5) questions of the MATH benchmark. We tune the cascade's deferral threshold on $n=250$ training examples and use the remaining $n=250$ questions for evaluation.

\begin{figure}[htbp]
    \centering

    \begin{subfigure}[b]{0.32\textwidth}
        \centering
        \includegraphics[width=\textwidth]{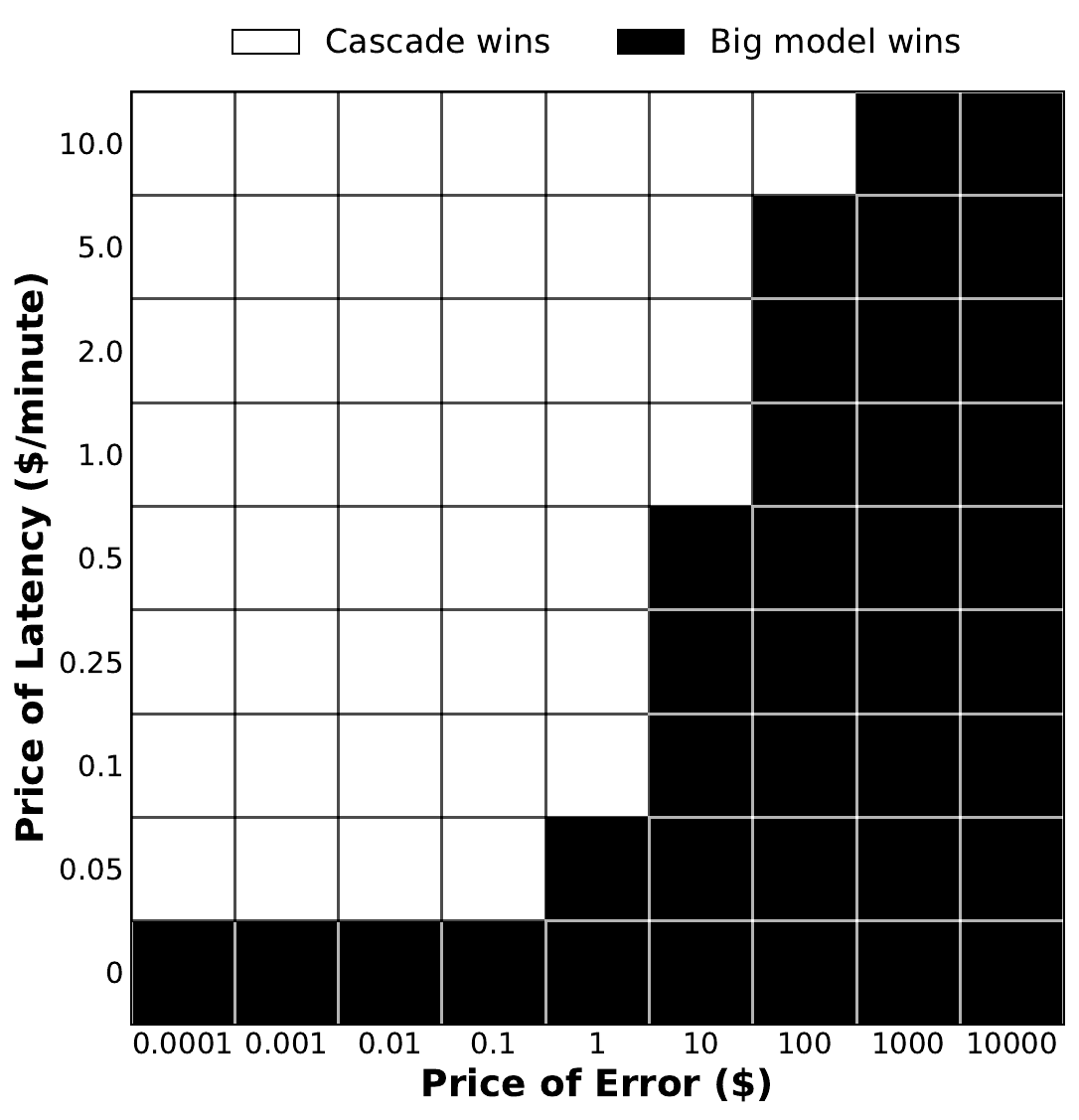}
        \caption{GPT 4.1 $\rightarrow$ $\mathcal{M}_\text{big}$}
        \label{fig:single_vs_cascade_b}
    \end{subfigure}
    \hfill
    \begin{subfigure}[b]{0.32\textwidth}
        \centering
        \includegraphics[width=\textwidth]{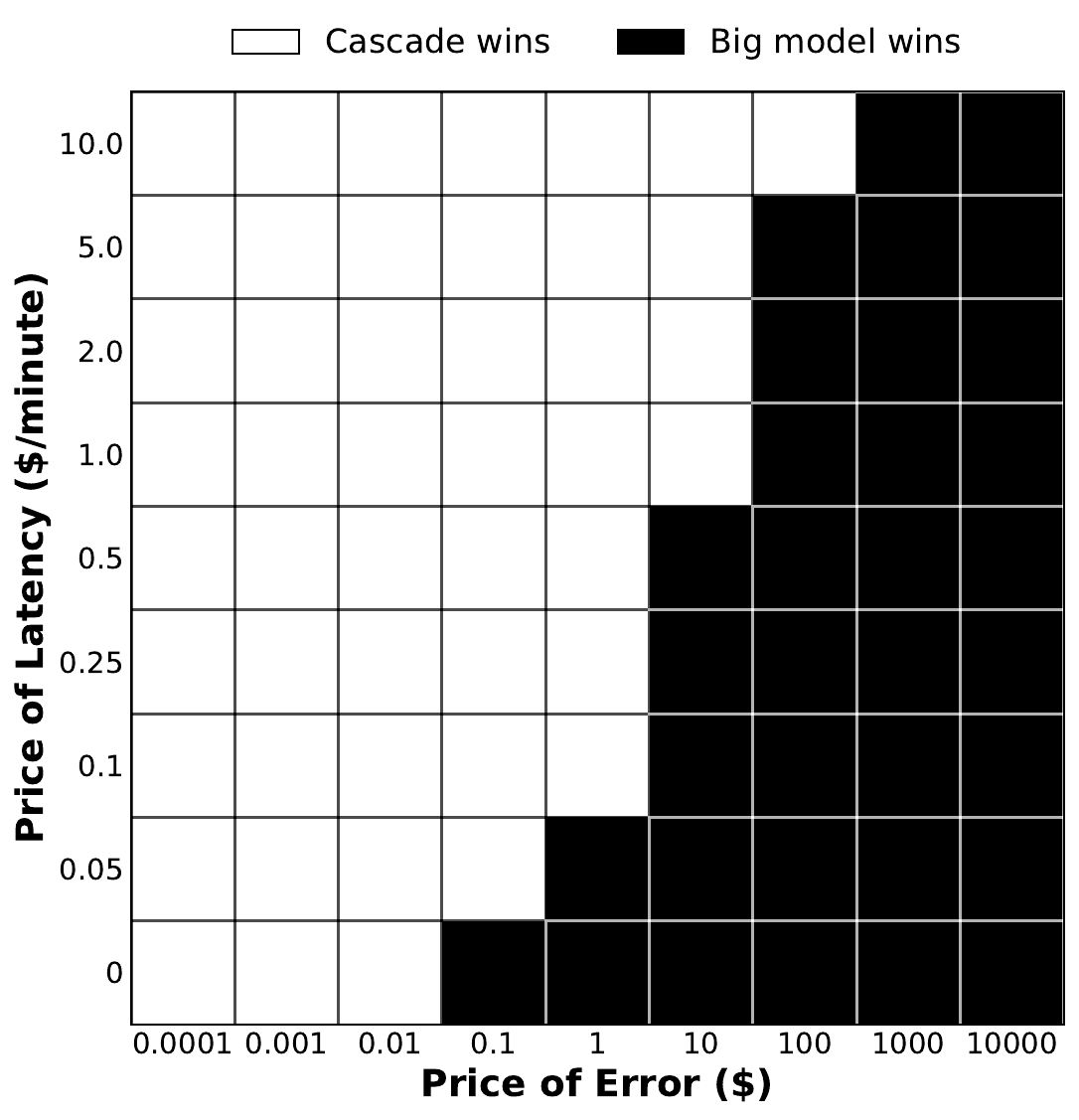}
        \caption{Llama3.3 70B $\rightarrow$ $\mathcal{M}_\text{big}$}
        \label{fig:single_vs_cascade_a}
    \end{subfigure}
    \hfill
    \begin{subfigure}[b]{0.32\textwidth}
        \centering
        \includegraphics[width=\textwidth]{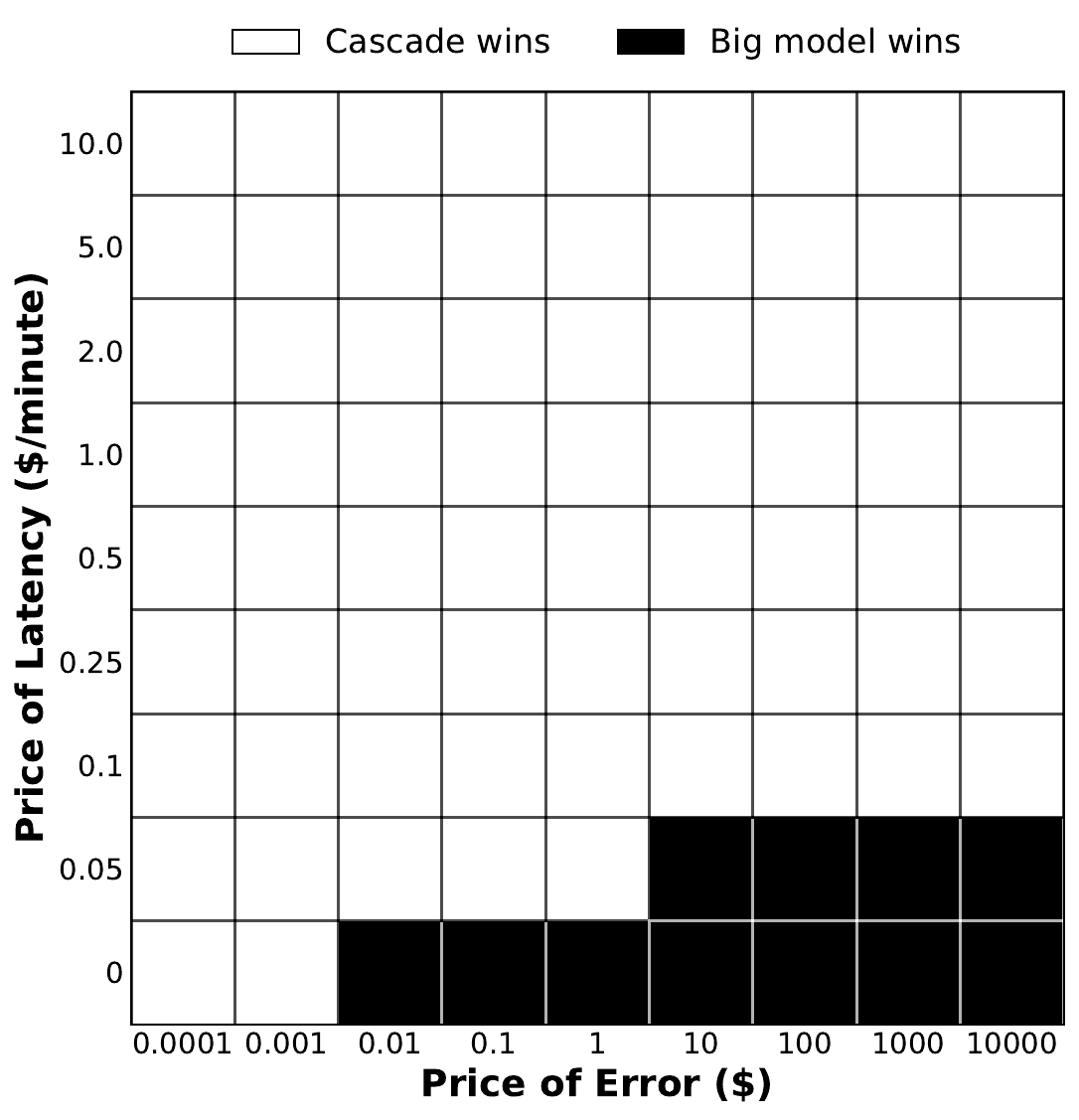}
        \caption{Llama3.3 405B $\rightarrow$ $\mathcal{M}_\text{big}$}
        \label{fig:single_vs_cascade_c}
    \end{subfigure}

    \caption{Directly sending queries to Qwen3 235B-A22B ($\mathcal{M}_\text{big}$) outperforms cascades when the cost of making a mistake exceeds \$0.10 and the price of latency is sufficiently low.}
    \label{fig:single_vs_cascade}
\end{figure}

Disregarding the impact of latency, using $\mathcal{M}_\text{big}$ = Qwen3 235B-A22 as a standalone LLM generally outperforms the cascade $\mathcal{M}_{\text{small}} \rightarrow \mathcal{M}_\text{big}$—as soon as the price of error exceeds $\lambda_E > \$0.10$. As the price of latency increases to $\lambda_L = \$0.5/\text{minute}$ (equiv. to \$30/hour), the critical price of error $\lambda_E^\text{critical}$ ratchets up to \$10. At $\lambda_L = \$10/\text{minute}$ (equiv. to \$600/hour), $\lambda_E^\text{critical}$ increases to $\$1,000$. This finding suggests that when automating medical diagnosis (with an estimated price of error between \$100 and \$1,000, as discussed in Section \ref{subsec:poe_estimate}), it may be preferable to avoid cascading.

However, this analysis carries an important caveat. For the cascade with $\mathcal{M}_\text{small}$ = Llama3.3 405B, we observed a marked and surprising outperformance over the other cascades—even though as a standalone LLM, Llama3.3 405B performs strictly worse than Llama3.3 70B, as shown in Figure \ref{fig:baseline_performance_difficulty_5}. The cascade $\text{Llama3.1 405B} \rightarrow \text{Qwen3 235B-A22B}$ yields better accuracy-cost-latency trade-offs than Qwen3 235B-A22B for the vast majority of economic scenarios, including prices of error up to $\$10,000$ and prices of latency up to \$10/minute (equiv. to \$600/hour).

Why does using Llama3.1 405B as the small model result in superior cascade performance? To explain this phenomenon, we point to the model's remarkable self-verification performance. To lay out our argument, we first restate the formula for the error rate of a two-model cascade given without proof by \citet{zellinger2024}:
\begin{theorem}[Cascade Error]
    \label{thm:cascade_error}
    Consider a cascade $\mathcal{M}_\text{small} \rightarrow \mathcal{M}_\text{big}$, where the deferral decision of $\mathcal{M}_\text{small}$ is determined by the indicator $\mathds{1}_D$. Then the error rate, $e_\text{cascade}$ of the cascade is
    \begin{equation}
        e_{\mathcal{M}_\text{small} \rightarrow \mathcal{M}_\text{big}} = (1 - p_d)~e_{\mathcal{M}_\text{small}} + p_d~e_{\mathcal{M}_\text{big}} + \text{Cov}(\mathds{1}_D, \mathds{1}_\text{error}^{\mathcal{M}_\text{big}}) - \text{Cov}(\mathds{1}_D, \mathds{1}_\text{error}^{\mathcal{M}_\text{small}}),
    \end{equation}
    where $p_d := \mathbb{E}[\mathds{1}_D]$ is the deferral rate, $e_{\mathcal{M}_\text{small}} := \mathbb{E}[\mathds{1}_\text{error}^{\mathcal{M}_\text{small}}]$ is the error rate of $\mathcal{M}_\text{small}$, and $e_{\mathcal{M}_\text{big}} := \mathbb{E}[\mathds{1}_\text{error}^{\mathcal{M}_\text{big}}]$ is the error rate of $\mathcal{M}_\text{big}$.
\end{theorem}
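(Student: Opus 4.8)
The plan is to decompose the cascade's error event into the contributions from the two models according to whether $\mathcal{M}_\text{small}$ defers, and then rewrite each term using the definition of covariance. Write $\mathds{1}_\text{error}^\text{cascade}$ for the indicator that the cascade's final output is wrong. Since the cascade returns $\mathcal{M}_\text{small}$'s answer when $\mathds{1}_D = 0$ and $\mathcal{M}_\text{big}$'s answer when $\mathds{1}_D = 1$, we have the pointwise identity
\begin{equation}
    \mathds{1}_\text{error}^\text{cascade} = (1 - \mathds{1}_D)\,\mathds{1}_\text{error}^{\mathcal{M}_\text{small}} + \mathds{1}_D\,\mathds{1}_\text{error}^{\mathcal{M}_\text{big}}.
\end{equation}
Taking expectations gives $e_\text{cascade} = \mathbb{E}[(1-\mathds{1}_D)\mathds{1}_\text{error}^{\mathcal{M}_\text{small}}] + \mathbb{E}[\mathds{1}_D\,\mathds{1}_\text{error}^{\mathcal{M}_\text{big}}]$.

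Next I would handle the two expectations separately using $\mathbb{E}[XY] = \mathbb{E}[X]\mathbb{E}[Y] + \text{Cov}(X,Y)$. For the $\mathcal{M}_\text{big}$ term, $\mathbb{E}[\mathds{1}_D\,\mathds{1}_\text{error}^{\mathcal{M}_\text{big}}] = \mathbb{E}[\mathds{1}_D]\,\mathbb{E}[\mathds{1}_\text{error}^{\mathcal{M}_\text{big}}] + \text{Cov}(\mathds{1}_D, \mathds{1}_\text{error}^{\mathcal{M}_\text{big}}) = p_d\, e_{\mathcal{M}_\text{big}} + \text{Cov}(\mathds{1}_D, \mathds{1}_\text{error}^{\mathcal{M}_\text{big}})$. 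For the $\mathcal{M}_\text{small}$ term, $\mathbb{E}[(1-\mathds{1}_D)\mathds{1}_\text{error}^{\mathcal{M}_\text{small}}] = \mathbb{E}[\mathds{1}_\text{error}^{\mathcal{M}_\text{small}}] - \mathbb{E}[\mathds{1}_D\,\mathds{1}_\text{error}^{\mathcal{M}_\text{small}}] = e_{\mathcal{M}_\text{small}} - p_d\, e_{\mathcal{M}_\text{small}} - \text{Cov}(\mathds{1}_D, \mathds{1}_\text{error}^{\mathcal{M}_\text{small}}) = (1-p_d)\, e_{\mathcal{M}_\text{small}} - \text{Cov}(\mathds{1}_D, \mathds{1}_\text{error}^{\mathcal{M}_\text{small}})$. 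Adding the two pieces yields exactly the claimed formula.

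There is no real obstacle here; the only point requiring a little care is the pointwise decomposition of $\mathds{1}_\text{error}^\text{cascade}$, which hinges on the modeling convention (made explicit in Section \ref{subsec:llms-as-agents}) that $\tau$ is the index of the first non-deferring model, so that on the event $\{\mathds{1}_D = 0\}$ the cascade's output and hence its error indicator coincide with those of $\mathcal{M}_\text{small}$, and on $\{\mathds{1}_D = 1\}$ they coincide with those of $\mathcal{M}_\text{big}$. One should also note that $\mathds{1}_\text{error}^{\mathcal{M}_\text{big}}$ here denotes $\mathcal{M}_\text{big}$'s error indicator evaluated on the same query (a well-defined random variable on the shared query distribution, regardless of whether $\mathcal{M}_\text{big}$ is actually invoked), which is what makes the covariance $\text{Cov}(\mathds{1}_D, \mathds{1}_\text{error}^{\mathcal{M}_\text{big}})$ meaningful. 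Given these conventions, the argument is a two-line application of the covariance identity, so I would present it compactly.
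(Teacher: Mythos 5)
Your proof is correct and takes essentially the same approach as the paper's: both begin from the pointwise decomposition $\mathds{1}_\text{error}^\text{cascade} = (1-\mathds{1}_D)\mathds{1}_\text{error}^{\mathcal{M}_\text{small}} + \mathds{1}_D\mathds{1}_\text{error}^{\mathcal{M}_\text{big}}$, take expectations, and recover the covariance terms via $\mathbb{E}[XY] = \mathbb{E}[X]\mathbb{E}[Y] + \text{Cov}(X,Y)$. You simply spell out the algebra (and the modeling conventions underlying the decomposition) more explicitly than the paper's terse sketch.
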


\begin{proof}
    This result follows from writing out the formula for cascade error,
    \begin{equation}
        e_{\mathcal{M}_\text{small} \rightarrow \mathcal{M}_\text{big}} = \mathbb{E}[ (1-\mathds{1}_D) \mathds{1}_\text{error}^{\mathcal{M}_\text{small}} + \mathds{1}_D \mathds{1}_\text{error}^{\mathcal{M}_\text{big}} ].
    \end{equation}
    Using the linearity of expectation, followed by adding and subtracting terms to recover the covariances, gives the result.
\end{proof}

Theorem \ref{thm:cascade_error} shows that the error rate of a cascade $\mathcal{M}_\text{small} \rightarrow \mathcal{M}_\text{big}$, relative to randomly sending queries to $\mathcal{M}_\text{small}$ and $\mathcal{M}_\text{big}$, depends on the difference in covariances $\text{Cov}(\mathds{1}_D, \mathds{1}_\text{error}^{\mathcal{M}_\text{big}}) - \text{Cov}(\mathds{1}_D, \mathds{1}_\text{error}^{\mathcal{M}_\text{small}})$. Intuitively, these covariances measure the increase in the models' error rates conditional on deferral. Specifically, $\text{Cov}(\mathds{1}_D, \mathds{1}_\text{error}^{\mathcal{M}_\text{small}})$ expresses the agreement between the deferral decision and the true uncertainty of $\mathcal{M}_\text{small}$.\footnote{Thus, $\text{Cov}(\mathds{1}_D, \mathds{1}_\text{error}^{\mathcal{M}_\text{small}})$ is closely related to the area under the accuracy-rejection curve (AUARC) from selective prediction (\citealp{elyaniv2010}).} In practice, $\text{Cov}(\mathds{1}_D, \mathds{1}_\text{error}^{\mathcal{M}_\text{big}}) \ll \text{Cov}(\mathds{1}_D, \mathds{1}_\text{error}^{\mathcal{M}_\text{small}})$, so $\text{Cov}(\mathds{1}_D, \mathds{1}_\text{error}^{\mathcal{M}_\text{small}})$ alone is a strong indicator of cascade effectiveness. We refer to this metric as the \textit{cascade error reduction} (CER).

\begin{figure}[htbp]
    \centering

    \begin{subfigure}[b]{0.32\textwidth}
        \centering
        \includegraphics[width=\textwidth]{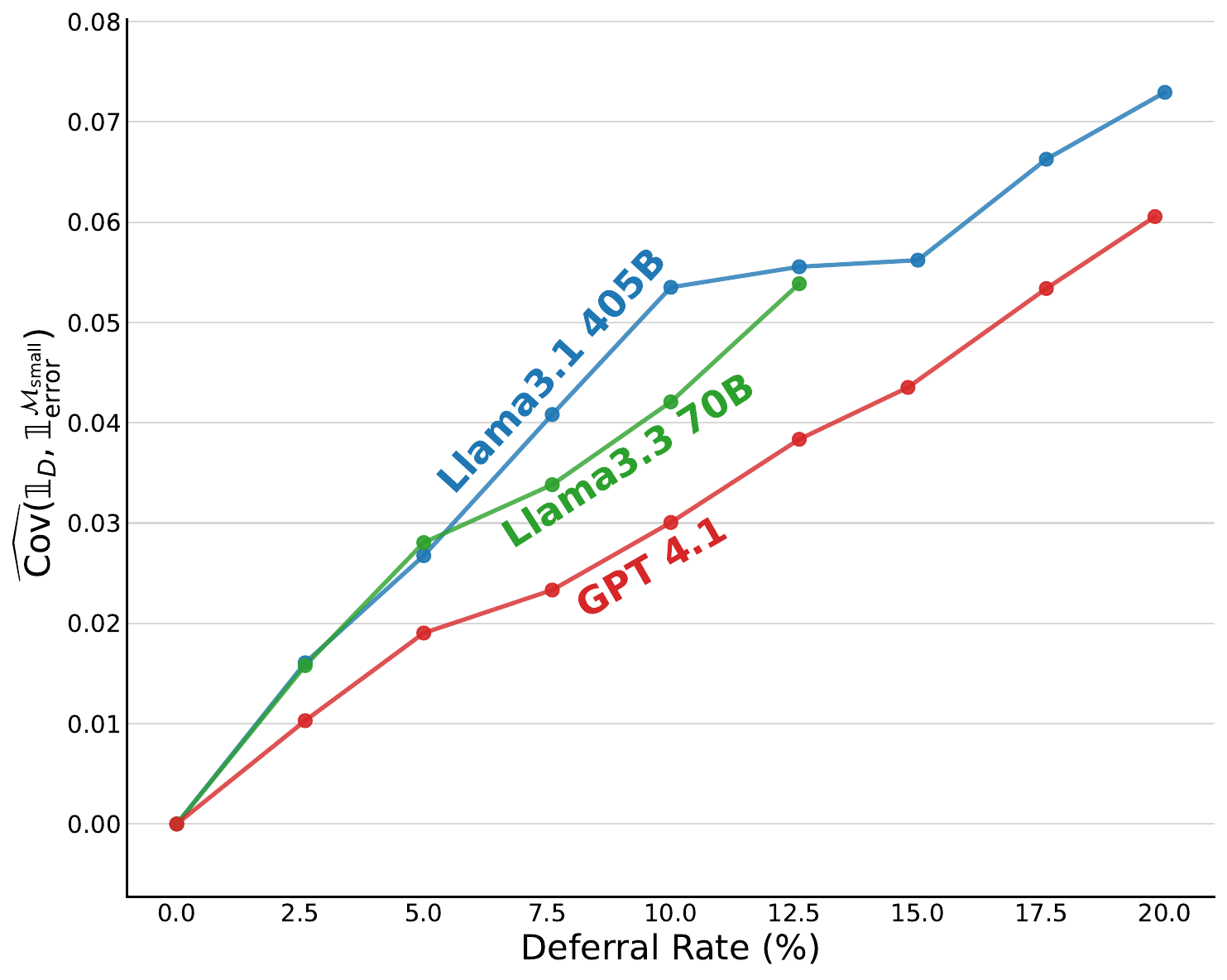}
        \caption{CER}
        \label{fig:why_llama405_makes_a_great_cascade_a}
    \end{subfigure}
    \hfill
    \begin{subfigure}[b]{0.32\textwidth}
        \centering
        \includegraphics[width=\textwidth]{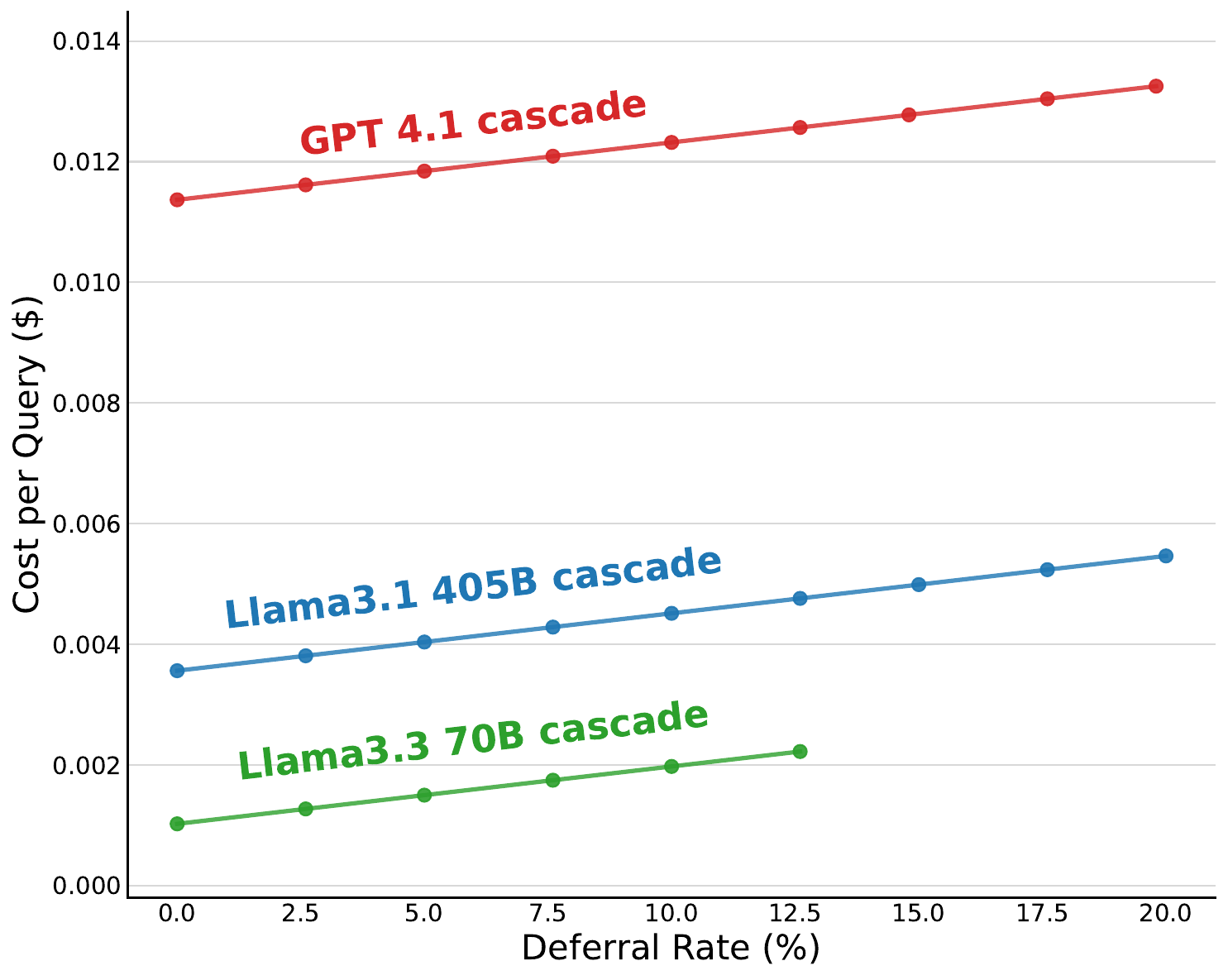}
        \caption{Cost}
        \label{fig:why_llama405_makes_a_great_cascade_b}
    \end{subfigure}
    \hfill
    \begin{subfigure}[b]{0.32\textwidth}
        \centering
        \includegraphics[width=\textwidth]{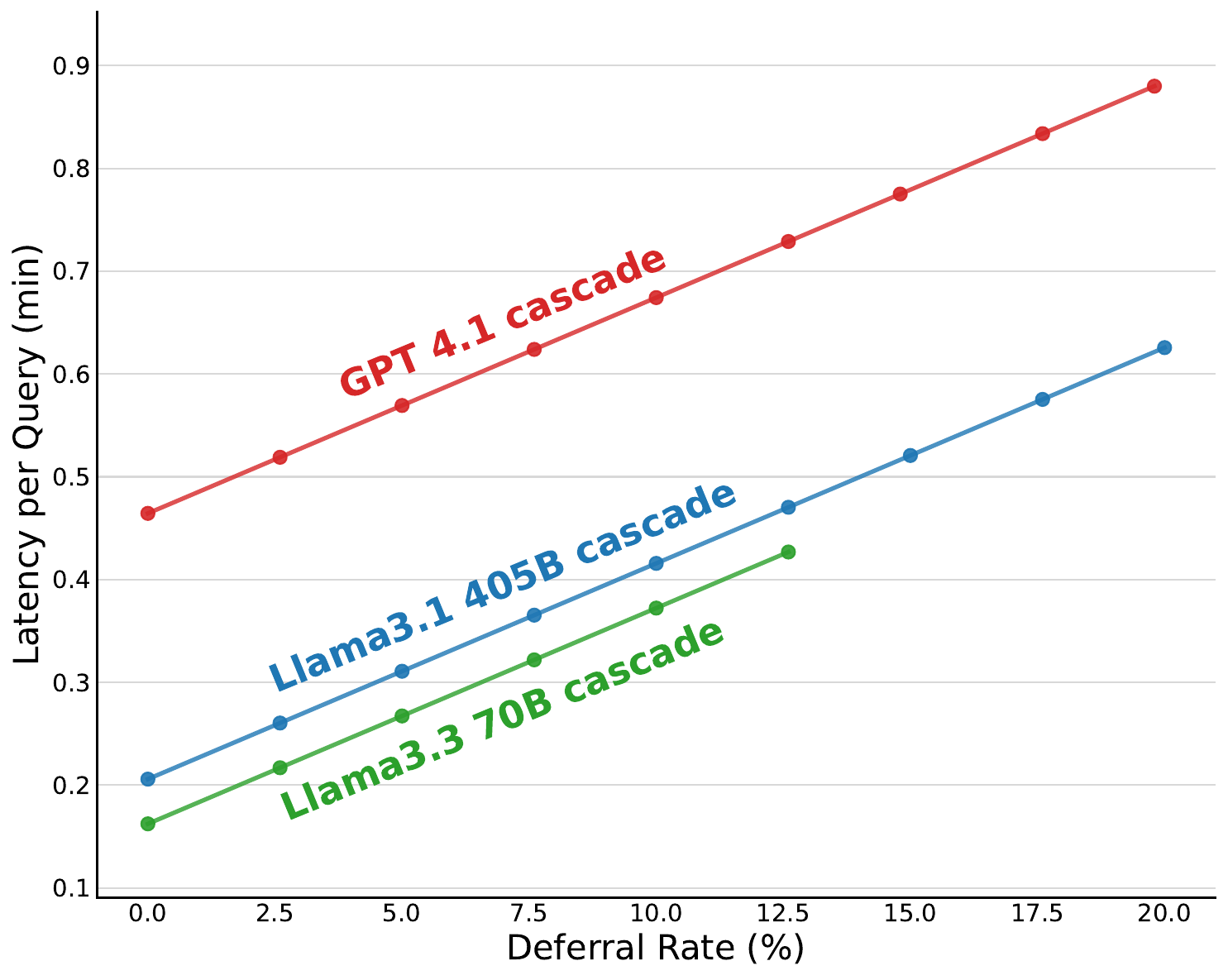}
        \caption{Latency}
        \label{fig:why_llama405_makes_a_great_cascade_c}
    \end{subfigure}

    \caption{Using Llama3.1 405B as $\mathcal{M}_\text{small}$ yields superior cascading performance, despite its subpar performance as a standalone model, because it performs better at self-verification—indicated by a higher cascade error reduction (CER) (a). In contrast to error rate, cost and latency are simple linear functions of the deferral rate (b and c).}
    \label{fig:why_llama405_makes_a_great_cascade}
\end{figure}

Figure \ref{fig:why_llama405_makes_a_great_cascade} plots CER = $\widehat{\text{Cov}}(\mathds{1}_D, \mathds{1}_\text{error}^{\mathcal{M}_\text{small}})$ against the cascades' deferral thresholds. The figure shows that $\widehat{\text{Cov}}(\mathds{1}_D, \mathds{1}_\text{error}^{\mathcal{M}_\text{small}})$ is highest for Llama3.1 405B, explaining its superior suitability for playing the role of $\mathcal{M}_\text{small}$ in a cascade (Figure \ref{fig:single_vs_cascade}). Figures \ref{fig:why_llama405_makes_a_great_cascade_b} and \ref{fig:why_llama405_makes_a_great_cascade_c} plot the cascades' costs and latencies for the same range of deferral rates, verifying that Llama3.1 405B's superior cascade performance is driven by error rate, not cost or latency.

\section{Related Work}

In this section, we discuss related work.

\noindent \textbf{Economic Analysis of LLMs.} Several papers have explored the economic and labor market impacts of large language models (\citealp{brynjolfsson2023}; \citealp{eloundou2024}). For example, \citet{eloundou2024} assess the vulnerability of different jobs to AI automation, finding that highly paid professional work is particularly exposed. Different from our work, the authors define exposure with respect to achieving time savings while maintaining the same quality of work. By contrast, we wish to highlight the comparatively low cost of AI labor, as reflected in costs per query between \$0.01 and \$0.1, even for state-of-the-art reasoning LLMs. Our argument is that lowering the cost of labor—even without saving time—yields significant economic benefits.

When it comes to ranking the performance of LLMs based on economic principles, the closest work to ours is the preprint by \citet{erol2025}. The authors propose quantifying accuracy-cost trade-offs by the ratio of cost to accuracy. Numerically, this approach is mathematically similar to our multi-objective reward (\ref{eq:per-query-reward-concrete}) when considering only accuracy and cost, and using a constant, unchanging price of error for all use cases. We believe that our framework's ability to handle a greater number of simultaneous performance objectives, and to differentiate between the economic realities of different industries (for example, customer support vs medical diagnosis), to be significant strengths. 

Moreover, we consider the theoretical motivation of \citet{erol2025}'s cost-of-pass metric to be flawed. Their argument is that with an accuracy of $a$, an LLM can be viewed to generate the correct answer to a query in $1/a$ attempts (assuming a geometric distribution). Hence, their metric measures the total cost (``cost of pass'') required to produce the correct answer. However, the correctness of repeated samples from an LLM with accuracy $a$ may \textit{only} be modeled as i.i.d. Bernoulli(a) trials if we also randomly sample new queries $q$. For a fixed query, repeated sampling does not reliably yield a correct answer, although it offers some benefits (\citealp{chen2024}).

\noindent \textbf{LLM Evaluation and Benchmarking}. The evaluation of large language models has received significant attention over the past few years (\citealp{laskar2024}). Prior work has mostly focused on assessing specific capabilities of LLMs, such as summarization (\citealp{narayan2018}), general knowledge (\citealp{hendrycks2017}), truthfulness (\citealp{lin2022}), mathematical reasoning (\citealp{hendrycks2021}), and others. In general, individual benchmarks rapidly become obsolete as LLMs become more capable. Notably, \citet{liang2023} provide a compelling synthesis of many of the different capabilities worth evaluating. In addition to benchmarking, the Chatbot Arena (\citealp{chiang2024}) has popularized ranking LLMs through crowd-sourced pairwise comparisons, in a manner similar to reinforcement learning from human feedback (\citealp{ouyang2022}).

These evaluation efforts are not directly comparable to our work, as we focus on \textit{multi-objective} evaluation of conflicting performance objectives, such as accuracy, cost, and latency. It is not sufficient to evaluate these metrics individually, since we wish to simultaneously optimize for these performance objectives.

\noindent \textbf{Multi-Objective LLM Systems.} Our emphasis on multi-objective evaluation is reflected in research on multi-LLM systems such as cascades (\citealp{ding2024}; \citealp{chen2023}; \citealp{madaan2024}) and routers (\citealp{shnitzer2023}; \citealp{hari2023}; \citealp{jitkrittum2025}). Researchers have mainly framed the accuracy-cost trade-off as a constrained minimization of the error rate subject to a cost budget (\citealp{chen2023}; \citealp{jitkrittum2024}; \citealp{hu2024}). Evaluation of different LLM systems proceeds by plotting Pareto-optimal error rates against the corresponding cost budgets. Unfortunately, this approach does not easily generalize to more than two performance objectives, since it is difficult to compare the quality of higher-dimensional Pareto frontiers (\citealp{zellinger2025b}). Specifically, the volume under a Pareto surface is not a meaningful metric unless the lower-dimensional projections of different Pareto frontiers substantially intersect.\footnote{The same problem may occur in two dimensions. Consider error-cost curves that are horizontally shifted, i.e., the attainable cost budgets are disjoint. Comparing the areas under these curves does not yield meaningful results.}

In addition, we suspect that prior work's emphasis on setting budgets for cost or latency may be overly influenced by the peculiarities of the IT industry, as these budgets closely reflect that industry's \textit{service-level objectives} (SLO). However, artificial intelligence is a society-wide phenomenon that extends far beyond IT. As AI starts to perform meaningful human work, it will likely emerge as a revenue generator rather than a cost center—diminishing the relevance of cost budgets as a mental framework.

\noindent \textbf{Computation of Pareto Frontiers}. From an optimization perspective, our methodology corresponds to a well-known method for approximating Pareto frontiers called the \textit{weighted sum method} for \textit{scalarization} (\citealp{koski1988}; \citealp{jahn1991}; \citealp{banholzer2019}). Prior work has identified certain disadvantages of scalarization: for example, it may not yield all Pareto-optimal trade-offs when the Pareto surface is non-convex, or coverage of the Pareto frontier might be unevenly distributed (\citealp{das1997}). Such issues are not a major concern for us, as we motivate our methodology on economic grounds, by casting LLM systems as reward-maximizing agents (see Section \ref{subsec:llms-as-agents}). We note, however, that Pareto frontiers arising in performance evaluation of LLM systems tend to be convex since randomly routing queries between two LLMs smoothly interpolates between their respective performance metrics.

\section{Conclusion}

We have presented an economic framework for evaluating the performance of LLMs and LLM systems. Compared to plotting Pareto frontiers, our approach yields a single optimal model based on a use case's economic constraints: the cost of making a mistake (\textit{price of error}), the cost of incremental latency (\textit{price of latency}), and the cost of abstaining from a query (\textit{price of abstention}), as well as possible additional objectives such as privacy (\citealp{zhang2025}).

We motivated our framework by casting LLMs and LLM systems as reward-maximizing agents, and revealed theoretical relationships between our proposed methodology and the established notion of Pareto optimality.

Applying our framework to empirically exploring the practical relevance of non-reasoning LLMs and cascades, we found several interesting results. First, reasoning models offer superior accuracy-cost trade-offs on difficult mathematics questions as soon as the price of error exceeds \$0.01. Second, a single large LLM $\mathcal{M}_\text{big}$ typically outperforms a cascade $\mathcal{M}_\text{big} \rightarrow \mathcal{M}_\text{big}$ for prices of error as low as \$0.1. 

Extrapolating these findings from mathematics to other domains, our results carry significant economic implications. We recommend that when automating meaningful human tasks with AI models, practitioners should typically use the most powerful available model, rather than attempt to minimize inference costs, since inference costs are likely dwarfed by the economic impact of AI errors. Fundamentally, this recommendation is based on the low costs per query of LLMs, which are increasingly negligible compared to human wages.

\section{Limitations}

First, our results on the MATH benchmark may be affected by data contamination, since the LLMs we evaluate may have been trained on similar questions. In addition, using Llama3.1 405B for correctness evaluation may artificially inflate this model's self-verification accuracy—however, we believe that this latter effect is limited since the prompt for correctness evaluation relies on access to the ground truth reference answer, whereas self-verification only incorporates the model's proposed (but possibly incorrect) answer.

\nocite{srivastava2023}

\bibliography{main}

\begin{thebibliography}{}

\bibitem[Aggarwal et~al., 2024]{madaan2024}
Aggarwal, P., Madaan, A., Anand, A., Potharaju, S.~P., Mishra, S., Zhou, P., Gupta, A., Rajagopal, D., Kappaganthu, K., Yang, Y., Upadhyay, S., Faruqui, M., and Mausam (2024).
\newblock Automix: Automatically mixing language models.

\bibitem[{Alibaba AI}, 2025]{yang2025}
{Alibaba AI} (2025).
\newblock Qwen3 technical report.

\bibitem[Banholzer and Volkwein, 2019]{banholzer2019}
Banholzer, S. and Volkwein, S. (2019).
\newblock Hierarchical convex multiobjective optimization by the euclidean reference point method.
\newblock Preprint SPP1962-117.

\bibitem[Bertsekas, 1999]{bertsekas1999}
Bertsekas, D.~P. (1999).
\newblock {\em Nonlinear Programming}.
\newblock Athena Scientific, Belmont, MA, 2 edition.

\bibitem[Branke et~al., 2008]{branke2008}
Branke, J., Deb, K., Miettinen, K., and Słowiński, R., editors (2008).
\newblock {\em Multiobjective Optimization}, volume 5252 of {\em Lecture Notes in Computer Science}.
\newblock Springer Berlin Heidelberg, Berlin; Heidelberg.
\newblock XX\,+\,470 pages.

\bibitem[Brown et~al., 2020]{brown2020}
Brown, T.~B., Mann, B., Ryder, N., Subbiah, M., Kaplan, J., Dhariwal, P., Neelakantan, A., Shyam, P., Sastry, G., Askell, A., Agarwal, S., Herbert-Voss, A., Krueger, G., Henighan, T., Child, R., Ramesh, A., Ziegler, D.~M., Wu, J., Winter, C., Hesse, C., Chen, M., Sigler, E., Litwin, M., Gray, S., Chess, B., Clark, J., Berner, C., McCandlish, S., Radford, A., Sutskever, I., and Amodei, D. (2020).
\newblock Language models are few-shot learners.

\bibitem[Brynjolfsson et~al., 2023]{brynjolfsson2023}
Brynjolfsson, E., Li, D., and Raymond, L.~R. (2023).
\newblock Generative ai at work.
\newblock Working Paper 31161, National Bureau of Economic Research.

\bibitem[Chen et~al., 2024]{chen2024}
Chen, L., Davis, J., Hanin, B., Bailis, P., Stoica, I., Zaharia, M., and Zou, J. (2024).
\newblock Are more llm calls all you need? towards the scaling properties of compound ai systems.
\newblock In Globerson, A., Mackey, L., Belgrave, D., Fan, A., Paquet, U., Tomczak, J., and Zhang, C., editors, {\em Advances in Neural Information Processing Systems}, volume~37, pages 45767--45790. Curran Associates, Inc.

\bibitem[Chen et~al., 2023]{chen2023}
Chen, L., Zaharia, M., and Zou, J. (2023).
\newblock Frugalgpt: How to use large language models while reducing cost and improving performance.

\bibitem[Chiang et~al., 2024]{chiang2024}
Chiang, W.-L., Zheng, L., Sheng, Y., Angelopoulos, A.~N., Li, T., Li, D., Zhu, B., Zhang, H., Jordan, M., Gonzalez, J.~E., and Stoica, I. (2024).
\newblock Chatbot arena: An open platform for evaluating {LLM}s by human preference.
\newblock In {\em Forty-first International Conference on Machine Learning}.

\bibitem[Chowdhery et~al., 2022]{chowdhery2022}
Chowdhery, A., Narang, S., Devlin, J., Bosma, M., Mishra, G., Roberts, A., Barham, P., Chung, H.~W., Sutton, C., Gehrmann, S., Schuh, P., Shi, K., Tsvyashchenko, S., Maynez, J., Rao, A., Barnes, P., Tay, Y., Shazeer, N., Prabhakaran, V., Reif, E., Du, N., Hutchinson, B., Pope, R., Bradbury, J., Austin, J., Isard, M., Gur-Ari, G., Yin, P., Duke, T., Levskaya, A., Ghemawat, S., Dev, S., Michalewski, H., Garcia, X., Misra, V., Robinson, K., Fedus, L., Zhou, D., Ippolito, D., Luan, D., Lim, H., Zoph, B., Spiridonov, A., Sepassi, R., Dohan, D., Agrawal, S., Omernick, M., Dai, A.~M., Pillai, T.~S., Pellat, M., Lewkowycz, A., Moreira, E., Child, R., Polozov, O., Lee, K., Zhou, Z., Wang, X., Saeta, B., Diaz, M., Firat, O., Catasta, M., Wei, J., Meier-Hellstern, K., Eck, D., Dean, J., Petrov, S., and Fiedel, N. (2022).
\newblock Palm: Scaling language modeling with pathways.

\bibitem[Chung et~al., 2022]{chung2022}
Chung, H.~W., Hou, L., Longpre, S., Zoph, B., Tay, Y., Fedus, W., Li, Y., Wang, X., Dehghani, M., Brahma, S., Webson, A., Gu, S.~S., Dai, Z., Suzgun, M., Chen, X., Chowdhery, A., Castro-Ros, A., Pellat, M., Robinson, K., Valter, D., Narang, S., Mishra, G., Yu, A., Zhao, V., Huang, Y., Dai, A., Yu, H., Petrov, S., Chi, E.~H., Dean, J., Devlin, J., Roberts, A., Zhou, D., Le, Q.~V., and Wei, J. (2022).
\newblock Scaling instruction-finetuned language models.

\bibitem[Coello et~al., 2005]{coello2005}
Coello, C. A.~C., Aguirre, A.~H., and Zitzler, E., editors (2005).
\newblock {\em Evolutionary Multi\mbox{-}Criterion Optimization\: Third International Conference, EMO~2005, Guanajuato, Mexico, March~9--11, 2005, Proceedings}, volume 3410 of {\em Lecture Notes in Computer Science}, Berlin\slash Heidelberg. Springer.
\newblock eBook ISBN\: 978-3-540-31880-4.

\bibitem[Das and Dennis, 1997]{das1997}
Das, I. and Dennis, J.\, E. (1997).
\newblock A closer look at drawbacks of minimizing weighted sums of objectives for pareto set generation in multicriteria optimization problems.
\newblock {\em Structural Optimization}, 14(1):63--69.

\bibitem[{DeepSeek AI}, 2025]{deepseek2025}
{DeepSeek AI} (2025).
\newblock {DeepSeek-R1}: Incentivizing reasoning capability in {LLMs} via reinforcement learning.

\bibitem[Ding et~al., 2024]{ding2024}
Ding, D., Mallick, A., Wang, C., Sim, R., Mukherjee, S., R{\"u}hle, V., Lakshmanan, L. V.~S., and Awadallah, A.~H. (2024).
\newblock Hybrid {LLM}: Cost-efficient and quality-aware query routing.
\newblock In {\em The Twelfth International Conference on Learning Representations}.

\bibitem[El-Yaniv and Wiener, 2010]{elyaniv2010}
El-Yaniv, R. and Wiener, Y. (2010).
\newblock On the foundations of noise-free selective classification.
\newblock {\em Journal of Machine Learning Research}, 11:1605--1641.

\bibitem[Eloundou et~al., 2024]{eloundou2024}
Eloundou, T., Manning, S., Mishkin, P., and Rock, D. (2024).
\newblock Gpts are gpts: Labor market impact potential of llms.
\newblock {\em Science}, 384(6702):1306--1308.

\bibitem[Erol et~al., 2025]{erol2025}
Erol, M.~H., El, B., Suzgun, M., Yuksekgonul, M., and Zou, J. (2025).
\newblock Cost-of-pass: An economic framework for evaluating language models.

\bibitem[Hammond, 2024]{hammond2024}
Hammond, G. (2024).
\newblock Speed of ai development stretches risk assessments to breaking point.
\newblock Artificial intelligence’s complexity exposes flaws in traditional methods used to evaluate safety and accuracy.

\bibitem[Hari and Thomson, 2023]{hari2023}
Hari, S.~N. and Thomson, M. (2023).
\newblock Tryage: Real-time, intelligent routing of user prompts to large language models.
\newblock {\em arXiv preprint arXiv:2308.11601}.

\bibitem[Hendrycks et~al., 2021]{hendrycks2021}
Hendrycks, D., Burns, C., Kadavath, S., Arora, A., Basart, S., Tang, E., Song, D., and Steinhardt, J. (2021).
\newblock Measuring mathematical problem solving with the math dataset.

\bibitem[Hendrycks and Gimpel, 2018]{hendrycks2017}
Hendrycks, D. and Gimpel, K. (2018).
\newblock A baseline for detecting misclassified and out-of-distribution examples in neural networks.

\bibitem[Hu et~al., 2024]{hu2024}
Hu, Q.~J., Bieker, J., Li, X., Jiang, N., Keigwin, B., Ranganath, G., Keutzer, K., and Upadhyay, S.~K. (2024).
\newblock Routerbench: A benchmark for multi-{LLM} routing system.
\newblock In {\em Agentic Markets Workshop at ICML 2024}.

\bibitem[Jahn et~al., 1991]{jahn1991}
Jahn, J., Klose, J., and Merkel, A. (1991).
\newblock On the application of a method of reference point approximation to bicriterial optimization problems in chemical engineering.
\newblock In Oettli, W. and Pallaschke, D., editors, {\em Advances in Optimization}, pages 478--491. Springer, Berlin; Heidelberg; New York.

\bibitem[Jena et~al., 2011]{anupam2011}
Jena, A.~B., Seabury, S., Lakdawalla, D., and Chandra, A. (2011).
\newblock Malpractice risk according to physician specialty.
\newblock {\em New England Journal of Medicine}, 365(7):629--636.

\bibitem[Jin, 2006]{jin2006}
Jin, Y., editor (2006).
\newblock {\em Multi-Objective Machine Learning}, volume~16 of {\em Studies in Computational Intelligence}.
\newblock Springer, Berlin \& Heidelberg.

\bibitem[Jitkrittum et~al., 2024]{jitkrittum2024}
Jitkrittum, W., Gupta, N., Menon, A.~K., Narasimhan, H., Rawat, A.~S., and Kumar, S. (2024).
\newblock When does confidence-based cascade deferral suffice?

\bibitem[Jitkrittum et~al., 2025]{jitkrittum2025}
Jitkrittum, W., Narasimhan, H., Rawat, A.~S., Juneja, J., Wang, Z., Lee, C.-Y., Shenoy, P., Panigrahy, R., Menon, A.~K., and Kumar, S. (2025).
\newblock Universal {LLM} routing with correctness-based representation.
\newblock In {\em First Workshop on Scalable Optimization for Efficient and Adaptive Foundation Models}.

\bibitem[Kadavath et~al., 2022]{kadavath2022}
Kadavath, S., Conerly, T., Askell, A., Henighan, T., Drain, D., Perez, E., Schiefer, N., Hatfield-Dodds, Z., DasSarma, N., Tran-Johnson, E., Johnston, S., El-Showk, S., Jones, A., Elhage, N., Hume, T., Chen, A., Bai, Y., Bowman, S., Fort, S., Ganguli, D., Hernandez, D., Jacobson, J., Kernion, J., Kravec, S., Lovitt, L., Ndousse, K., Olsson, C., Ringer, S., Amodei, D., Brown, T., Clark, J., Joseph, N., Mann, B., McCandlish, S., Olah, C., and Kaplan, J. (2022).
\newblock Language models (mostly) know what they know.

\bibitem[Kaplan et~al., 2020]{kaplan2020}
Kaplan, J., McCandlish, S., Henighan, T., Brown, T.~B., Chess, B., Child, R., Gray, S., Radford, A., Wu, J., and Amodei, D. (2020).
\newblock Scaling laws for neural language models.

\bibitem[Kohavi and Longbotham, 2007]{kohavi2007}
Kohavi, R. and Longbotham, R. (2007).
\newblock Online experiments: Lessons learned.
\newblock {\em Computer}, 40(9):103--105.

\bibitem[Kojima et~al., 2022]{kojima2022}
Kojima, T., Gu, S.~S., Reid, M., Matsuo, Y., and Iwasawa, Y. (2022).
\newblock Large language models are zero-shot reasoners.
\newblock In Oh, A.~H., Agarwal, A., Belgrave, D., and Cho, K., editors, {\em Advances in Neural Information Processing Systems}.

\bibitem[Koski, 1988]{koski1988}
Koski, J. (1988).
\newblock Multicriteria truss optimization.
\newblock In Stadler, W., editor, {\em Multicriteria Optimization in Engineering and in the Sciences}, pages 263--307. Plenum Press, New York.

\bibitem[Kudo and Richardson, 2018]{kudo2018}
Kudo, T. and Richardson, J. (2018).
\newblock {S}entence{P}iece: A simple and language independent subword tokenizer and detokenizer for neural text processing.
\newblock In Blanco, E. and Lu, W., editors, {\em Proceedings of the 2018 Conference on Empirical Methods in Natural Language Processing: System Demonstrations}, pages 66--71, Brussels, Belgium. Association for Computational Linguistics.

\bibitem[Laskar et~al., 2024]{laskar2024}
Laskar, M. T.~R., Alqahtani, S., Bari, M.~S., Rahman, M., Khan, M. A.~M., Khan, H., Jahan, I., Bhuiyan, A., Tan, C.~W., Parvez, M.~R., Hoque, E., Joty, S., and Huang, J. (2024).
\newblock A systematic survey and critical review on evaluating large language models: Challenges, limitations, and recommendations.
\newblock In Al-Onaizan, Y., Bansal, M., and Chen, Y.-N., editors, {\em Proceedings of the 2024 Conference on Empirical Methods in Natural Language Processing}, pages 13785--13816, Miami, Florida, USA. Association for Computational Linguistics.

\bibitem[Liang et~al., 2023]{liang2023}
Liang, P., Bommasani, R., Lee, T., Tsipras, D., Soylu, D., Yasunaga, M., Zhang, Y., Narayanan, D., Wu, Y., Kumar, A., Newman, B., Yuan, B., Yan, B., Zhang, C., Cosgrove, C., Manning, C.~D., Re, C., Acosta-Navas, D., Hudson, D.~A., Zelikman, E., Durmus, E., Ladhak, F., Rong, F., Ren, H., Yao, H., WANG, J., Santhanam, K., Orr, L., Zheng, L., Yuksekgonul, M., Suzgun, M., Kim, N., Guha, N., Chatterji, N.~S., Khattab, O., Henderson, P., Huang, Q., Chi, R.~A., Xie, S.~M., Santurkar, S., Ganguli, S., Hashimoto, T., Icard, T., Zhang, T., Chaudhary, V., Wang, W., Li, X., Mai, Y., Zhang, Y., and Koreeda, Y. (2023).
\newblock Holistic evaluation of language models.
\newblock {\em Transactions on Machine Learning Research}.
\newblock Featured Certification, Expert Certification, Outstanding Certification.

\bibitem[Lin et~al., 2022]{lin2022}
Lin, S., Hilton, J., and Evans, O. (2022).
\newblock {T}ruthful{QA}: Measuring how models mimic human falsehoods.
\newblock In Muresan, S., Nakov, P., and Villavicencio, A., editors, {\em Proceedings of the 60th Annual Meeting of the Association for Computational Linguistics (Volume 1: Long Papers)}, pages 3214--3252, Dublin, Ireland. Association for Computational Linguistics.

\bibitem[Mankiw, 2020]{mankiw2020}
Mankiw, N.~G. (2020).
\newblock {\em Principles of Economics}.
\newblock Cengage Learning, 9th edition.

\bibitem[{Meta AI}, 2024]{grattafiori2024}
{Meta AI} (2024).
\newblock The {Llama} 3 herd of models.

\bibitem[Narasimhan et~al., 2024]{narasimhan2024}
Narasimhan, H., Jitkrittum, W., Rawat, A.~S., Kim, S., Gupta, N., Menon, A.~K., and Kumar, S. (2024).
\newblock Faster cascades via speculative decoding.

\bibitem[Narayan et~al., 2018]{narayan2018}
Narayan, S., Cohen, S.~B., and Lapata, M. (2018).
\newblock Don{'}t give me the details, just the summary! topic-aware convolutional neural networks for extreme summarization.
\newblock In Riloff, E., Chiang, D., Hockenmaier, J., and Tsujii, J., editors, {\em Proceedings of the 2018 Conference on Empirical Methods in Natural Language Processing}, pages 1797--1807, Brussels, Belgium. Association for Computational Linguistics.

\bibitem[Ong et~al., 2025]{ong2025}
Ong, I., Almahairi, A., Wu, V., Chiang, W.-L., Wu, T., Gonzalez, J.~E., Kadous, M.~W., and Stoica, I. (2025).
\newblock Route{LLM}: Learning to route {LLM}s from preference data.
\newblock In {\em The Thirteenth International Conference on Learning Representations}.

\bibitem[{OpenAI}, 2024]{openai2024}
{OpenAI} (2024).
\newblock {GPT-4} technical report.

\bibitem[Ouyang et~al., 2022]{ouyang2022}
Ouyang, L., Wu, J., Jiang, X., Almeida, D., Wainwright, C.~L., Mishkin, P., Zhang, C., Agarwal, S., Slama, K., Ray, A., Schulman, J., Hilton, J., Kelton, F., Miller, L., Simens, M., Askell, A., Welinder, P., Christiano, P., Leike, J., and Lowe, R. (2022).
\newblock Training language models to follow instructions with human feedback.

\bibitem[Ravaut et~al., 2025]{ravaut2025}
Ravaut, M., Ding, B., Jiao, F., Chen, H., Li, X., Zhao, R., Qin, C., Xiong, C., and Joty, S. (2025).
\newblock A comprehensive survey of contamination detection methods in large language models.

\bibitem[Shnitzer et~al., 2023]{shnitzer2023}
Shnitzer, T., Ou, A., Silva, M., Soule, K., Sun, Y., Solomon, J., Thompson, N., and Yurochkin, M. (2023).
\newblock Large language model routing with benchmark datasets.
\newblock {\em arXiv preprint arXiv:2309.15789}.

\bibitem[Singh et~al., 2014]{singh2014}
Singh, H., Meyer, A. N.~D., and Thomas, E.~J. (2014).
\newblock The frequency of diagnostic errors in outpatient care: estimations from three large observational studies involving us adult populations.
\newblock {\em BMJ Quality \& Safety}, 23(9):727--731.

\bibitem[{Srivastava, A. et al.}, 2023]{srivastava2023}
{Srivastava, A. et al.} (2023).
\newblock Beyond the imitation game: Quantifying and extrapolating the capabilities of language models.
\newblock {\em Transactions on Machine Learning Research}.
\newblock Featured Certification.

\bibitem[Studdert et~al., 2006]{studdert2006}
Studdert, D.~M., Mello, M.~M., Gawande, A.~A., Gandhi, T.~K., Kachalia, A., Yoon, C., Puopolo, A.~L., and Brennan, T.~A. (2006).
\newblock Claims, errors, and compensation payments in medical malpractice litigation.
\newblock {\em New England Journal of Medicine}, 354(19):2024--2033.

\bibitem[Vaswani et~al., 2017]{vaswani2017}
Vaswani, A., Shazeer, N., Parmar, N., Uszkoreit, J., Jones, L., Gomez, A.~N., Kaiser, L.~u., and Polosukhin, I. (2017).
\newblock Attention is all you need.
\newblock In Guyon, I., Luxburg, U.~V., Bengio, S., Wallach, H., Fergus, R., Vishwanathan, S., and Garnett, R., editors, {\em Advances in Neural Information Processing Systems}, volume~30. Curran Associates, Inc.

\bibitem[Wang et~al., 2024]{wang2024}
Wang, C., Augenstein, S., Rush, K., Jitkrittum, W., Narasimhan, H., Rawat, A.~S., Menon, A.~K., and Go, A. (2024).
\newblock Cascade-aware training of language models.

\bibitem[Wei et~al., 2022]{wei2022}
Wei, J., Bosma, M., Zhao, V., Guu, K., Yu, A.~W., Lester, B., Du, N., Dai, A.~M., and Le, Q.~V. (2022).
\newblock Finetuned language models are zero-shot learners.
\newblock In {\em International Conference on Learning Representations}.

\bibitem[Zellinger et~al., 2025]{zellinger2025b}
Zellinger, M.~J., Liu, R., and Thomson, M. (2025).
\newblock Cost-saving llm cascades with early abstention.

\bibitem[Zellinger and Thomson, 2024]{zellinger2024}
Zellinger, M.~J. and Thomson, M. (2024).
\newblock Efficiently deploying llms with controlled risk.

\bibitem[Zellinger and Thomson, 2025]{zellinger2025}
Zellinger, M.~J. and Thomson, M. (2025).
\newblock Rational tuning of llm cascades via probabilistic modeling.

\bibitem[Zhang et~al., 2024]{zhang2025}
Zhang, K., Wang, C., Peng, L., Go, A., and Liu, X. (2024).
\newblock Privacy-preserved llm cascade via cot-enhanced policy learning.
\newblock {\em arXiv preprint arXiv:2410.08014}.

\end{thebibliography}

\newpage

\section*{Appendix A - Scaling of Output Tokens with Query Difficulty}

\begin{figure}[htbp]
    \centering

    \begin{subfigure}[b]{0.32\textwidth}
        \centering
        \includegraphics[width=\textwidth]{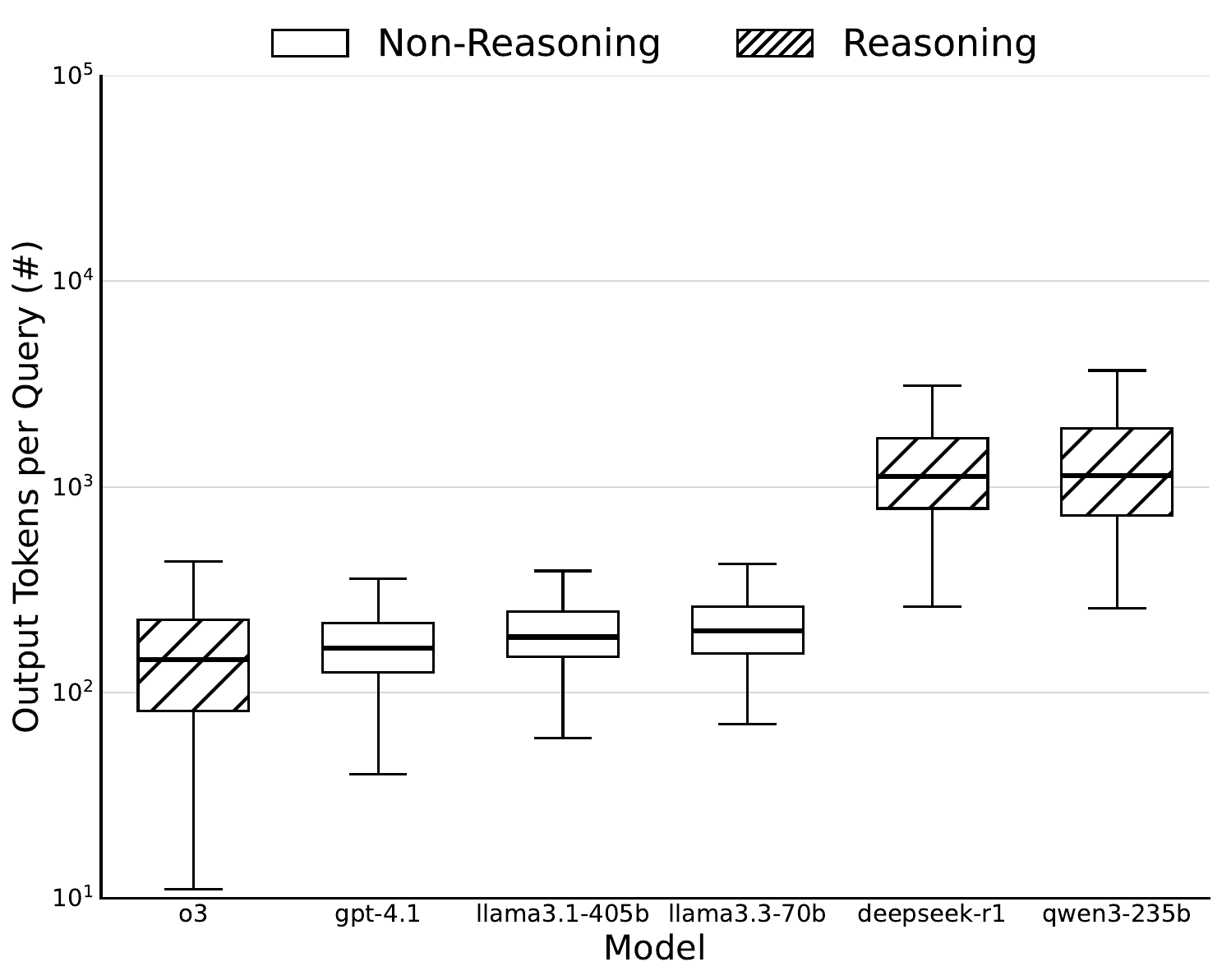}
        \caption{Difficulty: Easy}
        \label{fig:baseline_performance_difficulty_1_tokens}
    \end{subfigure}
    \hfill
    \begin{subfigure}[b]{0.32\textwidth}
        \centering
        \includegraphics[width=\textwidth]{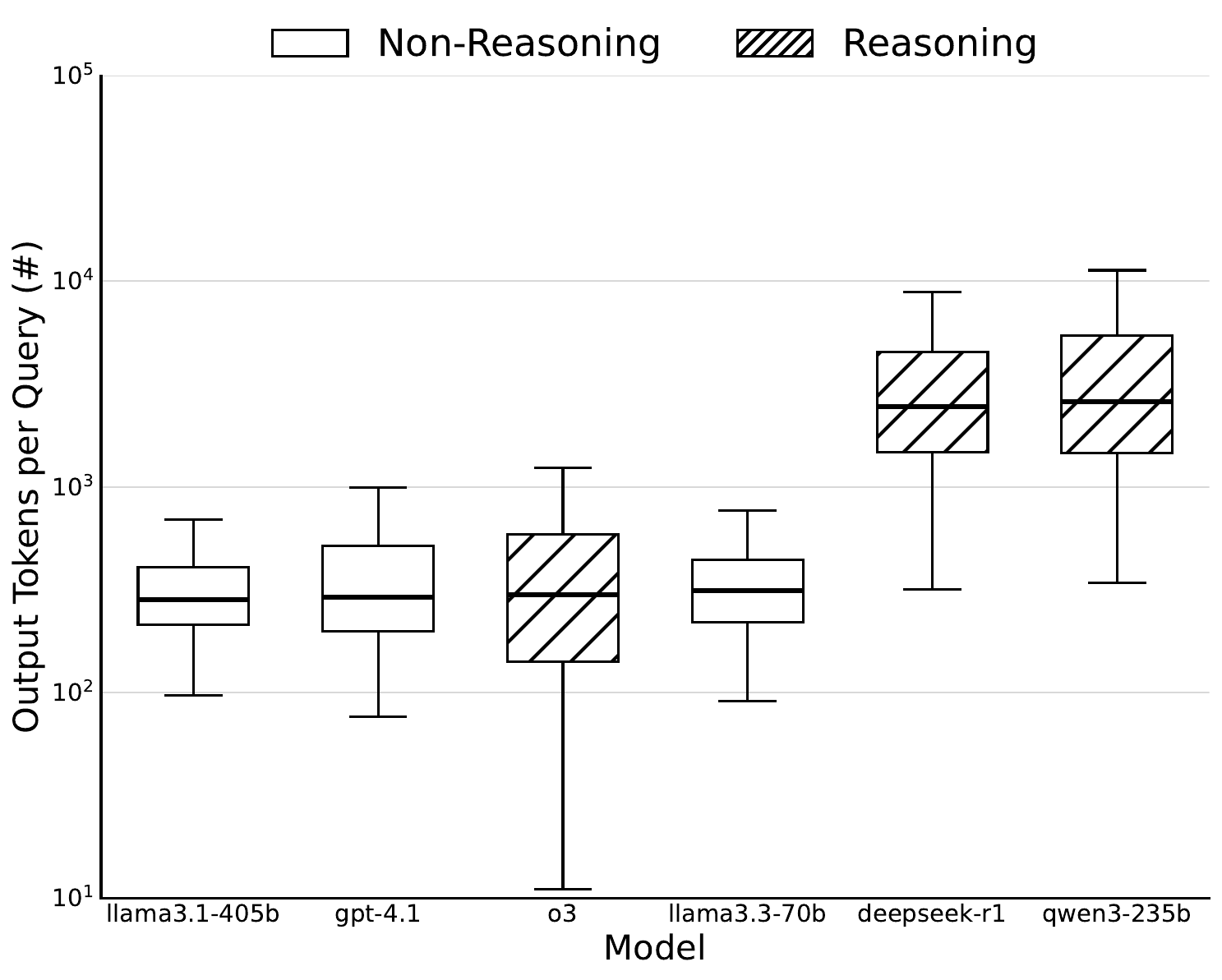}
        \caption{Difficulty: Medium}
        \label{fig:baseline_performance_difficulty_3_tokens}
    \end{subfigure}
    \hfill
    \begin{subfigure}[b]{0.32\textwidth}
        \centering
        \includegraphics[width=\textwidth]{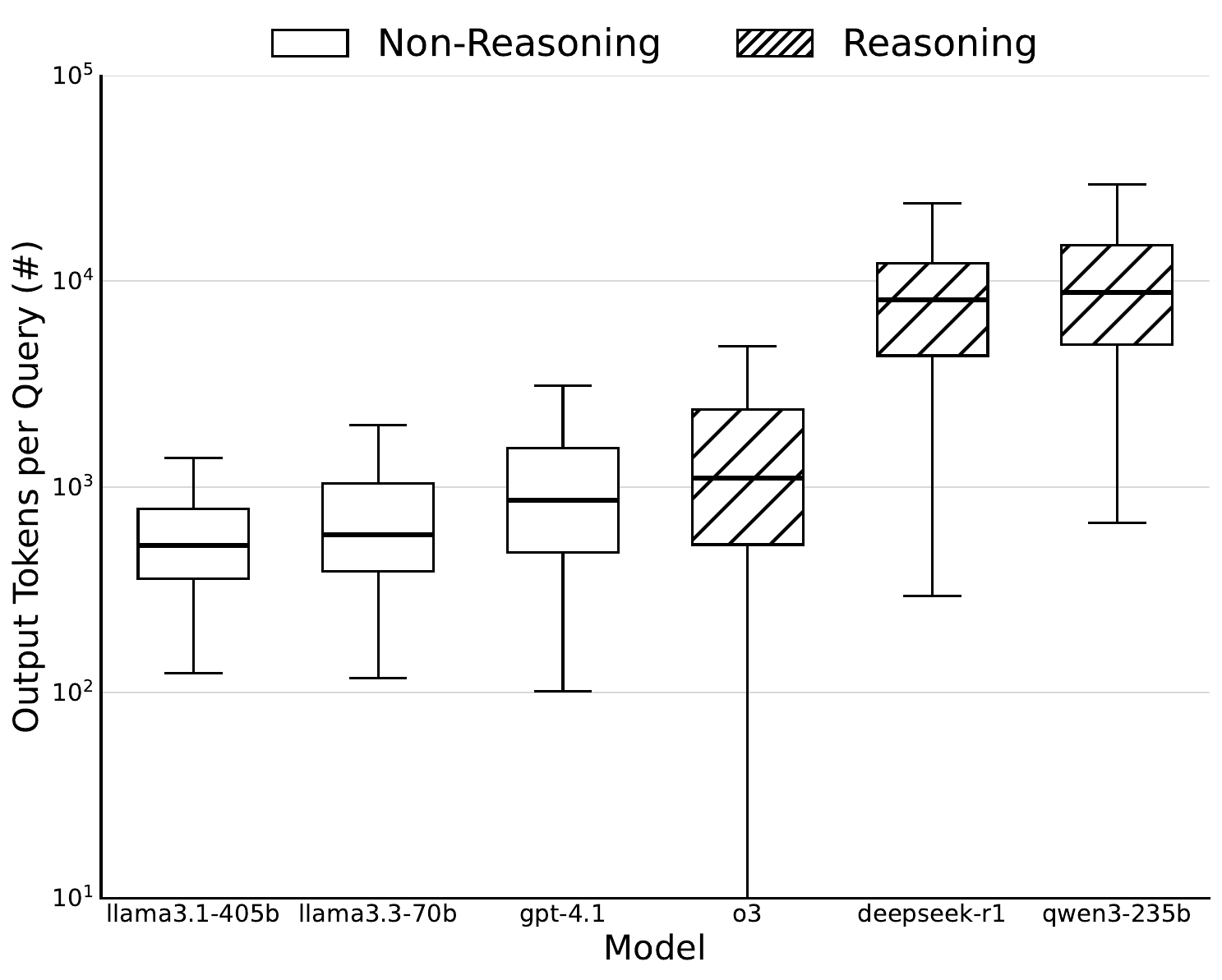}
        \caption{Difficulty: Hard}
        \label{fig:baseline_performance_difficulty_5_tokens}
    \end{subfigure}

    \caption{Non-reasoning models prompted with chain-of-thought exhibit similar scaling of output tokens compared to reasoning models, but reasoning models start from a higher baseline number of output tokens.}
    \label{fig:baseline_performance_tokens}
\end{figure}

\section*{Appendix B - Proof of Theorem 1}

\begin{theorem*}
\label{thm:pareto_mapping_appendix}
Let $\theta^{*}(\lambda)$ be the solution to the reward maximization problem
\begin{equation}
\label{eq:reward_opt_appendix}
\begin{aligned}
    \theta^{*} =~ & \text{argmax}_\theta && R(\lambda; \theta),
\end{aligned}
\end{equation}
where $\theta \in \mathbb{R}^{p}$ denotes an LLM system's tunable parameters, and $\lambda$ is the vector of economic costs as defined in Section \ref{subsec:quantifying_user_preferences}. Assume that regularity conditions hold, such that for each $\lambda \in \mathbb{R}_{>0}^{|\mathcal{P}_\text{numeric}| + |\mathcal{P}_\text{binary}|}$ there exist bounds $\{ \gamma_\mu \}_{\mu \in \mathcal{P}_\text{numeric}}$ and $\{ \gamma_\chi \}_{\mu \in \mathcal{P}_\text{binary}} > 0$ such that $\theta^{*}(\lambda)$ is equivalently the solution of the constrained optimization problem
\begin{equation}
\label{eq:constrained_opt_general}
\begin{aligned}
    \theta^{*} =~ & \text{argmin}_\theta && \hat{\mathbb{E}}_\theta[C] \\
    & \text{subject to} && \hat{\mathbb{E}}_\theta[\mu] \leq \gamma_\mu, ~~\mu \in \mathcal{P}_\text{numeric} \\
    & && \hat{\mathbb{E}}_{\theta}[\mathds{1}_{\chi}] \leq \gamma_\chi, ~~\chi \in \mathcal{P}_\text{binary},
\end{aligned}
\end{equation}
and vice versa for $\gamma \mapsto \lambda(\gamma)$. Then the vector of economic costs, $\lambda$, maps surjectively onto the Pareto surface via the mapping
\begin{equation}
\label{eq:pareto_mapping}
    \lambda \mapsto (\hat{\mathbb{E}}_{\theta^{*}(\lambda)}[C], \hat{\mathbb{E}}_{\theta^{*}(\lambda)}[\mu_1], ..., \hat{\mathbb{E}}_{\theta^{*}(\lambda)}[\mu_{|\mathcal{P}_\text{numeric}|}], \hat{\mathbb{P}}_{\theta^{*}(\lambda)}[\chi_1], ..., \hat{\mathbb{P}}_{\theta^{*}(\lambda)}[\chi_{|\mathcal{P}_\text{binary}|}]).
\end{equation}
\end{theorem*}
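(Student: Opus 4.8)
The plan is to prove surjectivity directly by exhibiting, for an arbitrary point $P^{*}$ of the Pareto surface, an economic cost vector $\lambda$ that the mapping (\ref{eq:pareto_mapping}) sends to $P^{*}$. By definition $P^{*} = F(\theta_0)$ for some Pareto-optimal $\theta_0 \in \mathcal{P}$, say $P^{*} = (\hat{\mathbb{E}}_{\theta_0}[C], \hat{\mathbb{E}}_{\theta_0}[\mu_1], \dots, \hat{\mathbb{P}}_{\theta_0}[\chi_{|\mathcal{P}_\text{binary}|}])$. The construction of $\lambda$ proceeds in two steps: first I route through the constrained problem (\ref{eq:constrained_opt_general}), then I use the regularity hypothesis to route back to a reward-maximization problem indexed by some $\lambda$.

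For the first step I would set the budgets equal to the non-cost coordinates of $P^{*}$, i.e.\ $\gamma_\mu := \hat{\mathbb{E}}_{\theta_0}[\mu]$ and $\gamma_\chi := \hat{\mathbb{P}}_{\theta_0}[\chi]$, and consider a minimizer $\theta'$ of (\ref{eq:constrained_opt_general}) with these budgets. Since $\theta_0$ is feasible, $\theta'$ satisfies $\hat{\mathbb{E}}_{\theta'}[C] \le \hat{\mathbb{E}}_{\theta_0}[C]$, $\hat{\mathbb{E}}_{\theta'}[\mu] \le \gamma_\mu$, and $\hat{\mathbb{P}}_{\theta'}[\chi] \le \gamma_\chi$ for all metrics. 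The crucial observation is that every one of these inequalities must in fact be an equality: a strict inequality anywhere would make $\theta'$ strictly better than $\theta_0$ on that objective while no worse on the others, i.e.\ $F(\theta') <_\text{P} F(\theta_0)$, contradicting $\theta_0 \in \mathcal{P}$. Hence $F(\theta') = P^{*}$. For the second step I invoke the regularity hypothesis in the direction $\gamma \mapsto \lambda(\gamma)$: for the budget vector $\gamma$ just built there is a strictly positive $\lambda := \lambda(\gamma)$ whose reward maximizer coincides with the minimizer of (\ref{eq:constrained_opt_general}), so $\theta^{*}(\lambda)$ has performance vector $F(\theta') = P^{*}$ and the mapping (\ref{eq:pareto_mapping}) sends $\lambda$ to $P^{*}$. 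As $P^{*}$ was arbitrary, surjectivity follows. Only the performance vectors — not the configurations — need to agree at each stage, so possible non-uniqueness of $\theta'$ or of $\theta^{*}(\lambda)$ is harmless; and the reverse containment (every $\theta^{*}(\lambda)$ is itself Pareto-optimal, so the map lands inside the Pareto surface) is obtained by running the same argument backwards, using that the budgets in the regularity characterization of $\theta^{*}(\lambda)$ may be taken equal to its realized metric values.

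I expect the main difficulty to be careful bookkeeping rather than a hard estimate, since the theorem's content is largely packed into the stated regularity assumption (the two-way equivalence of (\ref{eq:reward_opt_appendix}) and (\ref{eq:constrained_opt_general})). Two points warrant attention. First, the tightness argument must quote exactly the Pareto-dominance relation $<_\text{P}$ and the Pareto set $\mathcal{P}$ from Section 2, so that the contradiction is immediate from the definition of the Pareto frontier. Second, there is a boundary case where some coordinate $\gamma_\mu$ or $\gamma_\chi$ read off from $P^{*}$ equals $0$ — for instance a configuration with zero error rate — which lies outside the strictly-positive budget range over which the $\gamma \mapsto \lambda(\gamma)$ correspondence is assumed; I would handle this either by restricting to the generic part of the Pareto surface on which all metrics are positive, or by approximating $P^{*}$ by interior Pareto points and passing to a limit.
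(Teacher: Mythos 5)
Your proposal is correct and follows essentially the same route as the paper's proof: set the budgets in the constrained problem equal to the non-cost coordinates of the target Pareto point, argue by Pareto-dominance that the constrained minimizer must attain exactly those coordinates (and the same cost), and then use the assumed $\gamma \mapsto \lambda(\gamma)$ equivalence to produce the preimage $\lambda$. The only minor difference is that the paper establishes the forward containment (that the map lands inside the Pareto surface) directly from reward optimality — if $F(\theta')$ dominated $F(\theta^*(\lambda))$ then $R(\lambda;\theta') > R(\lambda;\theta^*(\lambda))$ — whereas you sketch it as ``running the argument backwards''; your observation about the zero-budget boundary case is a reasonable caveat the paper does not address.
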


\begin{proof}
First, we show that $\lambda$ maps to the Pareto surface. Second, we show that this mapping is surjective.

Consider any $\lambda \in \mathbb{R}_{>0}^{|\mathcal{P}_\text{numeric}| + |\mathcal{P}_\text{binary}|}$, and let $x = (\hat{\mathbb{E}}_{\theta^{*}(\lambda)}[C], ..., \hat{\mathbb{P}}_{\theta^{*}(\lambda)}[\chi_{|\mathcal{P}_\text{binary}|}])$. Suppose for the sake of contradiction that $x$ is not Pareto optimal. Then there exist $\theta'$ and $x' = (\hat{\mathbb{E}}_{\theta'(\lambda)}[C], ..., \hat{\mathbb{P}}_{\theta'(\lambda)}[\chi_{|\mathcal{P}_\text{binary}|}])$ such that $x'$ dominates $x$. It follows that
\begin{align}
    R(\lambda; \theta') - R(\lambda; \theta^{*}(\lambda)) & = \mathbb{E}_{\theta^{*}(\lambda)} [C] - \mathbb{E}_{\theta'} [C] \\
    & ~~~~+ \sum_{\mu} \lambda_\mu (\mathbb{E}_{\theta^{*}(\lambda)} [\mu] - \mathbb{E}_{\theta'} [\mu]) + \sum_{\chi} \lambda_\chi (\mathbb{P}_{\theta^{*}(\lambda)}(\chi) - \mathbb{P}_{\theta'} (\chi)) \\
    & > 0,
\end{align}
contradicting the optimality of $\theta^{*}(\lambda)$.

Now consider any $x \in \mathbb{R}^{|\mathcal{P}_\text{numeric}| + |\mathcal{P}_\text{binary}| + 1}$ on the Pareto surface. Let $\theta^{*}_c$ be the solution of the constrained optimization problem (\ref{eq:constrained_opt_general}) with $\{ \gamma_\mu \}$, $\{ \gamma_\chi \}$ equal to the last $|\mathcal{P}_\text{numeric}| + |\mathcal{P}_\text{binary}|$ components of $x$. Let $x^{*} = (\hat{\mathbb{E}}_{\theta^{*}_c}[C], ..., \hat{\mathbb{P}}_{\theta^{*}_c}[\chi_{|\mathcal{P}_\text{binary}|}])$.

We argue that $x = x^{*}$. Indeed, $x$ is a feasible point of $(\ref{eq:constrained_opt_general})$ since it lies on the Pareto surface and meets the inequality constraints. In addition, $x^{*}_0 = \hat{\mathbb{E}}_{\theta^{*}_c}[C]$ cannot be \textit{less} than $x_0$, since otherwise $x^{*}$ would dominate $x$. Furthermore, $x^{*}_0 = \hat{\mathbb{E}}_{\theta^{*}_c}[C]$ cannot be \textit{greater} than $x_0$ by the optimality of $x^{*}$, since $x$ is feasible. So $x^{*}_0 = x_0$. Hence, $x^{*} \leq x$ componentwise. Hence, the remaining components of $x^{*}$ and $x$ must be equal, since otherwise $x^{*}$ would dominate $x$. So $x^{*} = x$.

Now observe that $x^{*}$ arises from solving (\ref{eq:reward_opt}) with $\lambda = \lambda(\gamma)$. Hence, $\lambda$ maps to $x$ under the mapping (\ref{eq:pareto_mapping}).
\end{proof}

\section*{Appendix C - Models and Pricing}

Table~\ref{tab:llm-pricing} lists the large language models (LLM) we used in our experiments and the API prices at the time of our experiments (June 2025). We provide each model's exact API identifier together with the cost for input and output tokens.

We used the default hyperparameters (temperature, top-p, top-k, etc.) for sampling from the LLMs, except that we raised the maximum number of output tokens to 100,000. Only for Llama3.3 70B did we implement a lower output token limit of 5,000, since the model otherwise gets caught in endless repetitions on some queries.

\begin{table}[h]
  \centering
  \footnotesize
  \caption{Details on API providers, LLM identifiers, and costs.}
  \label{tab:llm-pricing}
  \begin{tabular}{@{}llcc@{}}
    \toprule
    \textbf{API Provider} &
      \textbf{Model Identifier} &
      \textbf{Input \$ / M tok} &
      \textbf{Output \$ / M tok} \\
    \midrule
    \multicolumn{4}{l}{\emph{Models accessed through the OpenAI API}} \\
    OpenAI & \texttt{gpt-4.1-2025-04-14} & 2.00 & 8.00 \\
    OpenAI & \texttt{o3-2025-04-16}       & 2.00 & 8.00 \\
    \midrule
    \multicolumn{4}{l}{\emph{Models accessed through the Fireworks API}} \\
    Fireworks & \texttt{deepseek-r1-0528}           & 3.00 & 8.00 \\
    Fireworks & \texttt{qwen3-235b-a22b}            & 0.22 & 0.88 \\
    Fireworks & \texttt{llama-v3p3-70b-instruct}    & 0.90 & 0.90 \\
    Fireworks & \texttt{llama-v3p1-405b-instruct}   & 3.00 & 3.00 \\
    \bottomrule
  \end{tabular}
\end{table}

We note that we measured negligible roundtrip latency (less than 300ms) to both API endpoints.

\section*{Appendix D - Cascading Setup}

Similar to \citealp{zellinger2025}, we use self-verification (also known as \textit{P(True)}, from \citealp{kadavath2022}) to estimate an LLM's confidence to correctly answer a query. Specifically, given a query, the LLM sends itself a follow-up verification prompt asking whether the proposed answer is correct. Since the response to this query is a single token (Yes/No), we extract the estimated probability of correctness $p$ directly from the LLM's auto-regressive next-token probability. This $p$ is the \textit{self-verification correctness probability}.

To select the optimal confidence threshold for a use case $\lambda$, we maximize the cascade's expected reward $R(\lambda; \theta)$ for $\theta$ ranging over all 2.5\% quantiles of empirically observed self-verification correctness probabilities on the training set. To evaluate the cascade's performance, we fix the optimal confidence thresholds $\theta^{*} = \theta^{*}(\lambda)$ (dependent on $\lambda$) and compute the expected rewards $R(\lambda; \theta^{*}(\lambda))$ on the test set.

\section*{Appendix E - Prompt Templates}

\lstset{
  basicstyle   = \ttfamily\small,
  escapeinside = {(*@}{@*)}   
}

This appendix reproduces verbatim the prompt templates used in our
experiments. Placeholders are printed in \textbf{bold} and are wrapped in curly braces.

\subsection{MATH Benchmark — Problem-Solving Prompts}
\label{app:math-prompts}

\noindent \textbf{System prompt:} \begin{lstlisting}
Your task is to solve a math problem. First think step-by-step,
then end by giving your final answer in the form
'Final Answer: x', where x is the final answer.
DO NOT say anything after that. Make sure to end on the numeric answer.
\end{lstlisting}

\noindent \textbf{User prompt:} 
\begin{lstlisting}
Your task is to solve the following math problem: (*@\textbf{\{problem\}}@*)

Reason step-by-step, then give your final by saying
'Final Answer: x', where x is the final numeric answer.
DO NOT say anything after that. Make sure to end on the numeric answer.
\end{lstlisting}

\subsection{MATH Benchmark — Evaluation Prompts}
\label{app:self-verification-prompts}

\noindent \textbf{System prompt:} 
\begin{lstlisting}
Your task is to determine if an AI model's solution to a
college-level math problem is correct.
If the solution is correct, output "Y".
Otherwise, output "N".
Only output "Y" or "N", nothing else.
\end{lstlisting}

\noindent \textbf{User prompt:} 
\begin{lstlisting}
Consider a proposed solution to the following math problem:

Problem:
(*@\textbf{\{problem\}}@*)

Proposed solution:
(*@\textbf{\{proposed\_sol\}}@*)

Decide if the proposed solution is correct.
Only output "Y" or "N", nothing else.

Correct?
\end{lstlisting}

\end{document}